\documentclass{article}

\usepackage[main,preprint]{neurips_2026}
\usepackage[utf8]{inputenc}
\usepackage[T1]{fontenc}
\usepackage{hyperref}
\usepackage{url}
\usepackage{booktabs}
\usepackage{pgfplots}
\usepackage{amsfonts}
\usepackage{mathtools}
\usepackage{nicefrac}
\usepackage{microtype}
\usepackage{xcolor}
\usepackage{graphicx}
\usepackage{amsmath}
\usepackage{amssymb}
\usepackage{enumitem}
\usepackage{pifont}
\usepackage{colortbl, multirow}
\usepackage{pgfplots}\usepackage{pgfplotstable}
\usepackage{tikz}
\usepackage{algorithm}
\usepackage{algpseudocode}
\usepackage{bm}
\usepackage{caption}
\usepackage{subcaption}     
\usepackage{wrapfig}  

\pgfplotsset{compat=1.18}
\usepackage{soul}
\usepackage[table]{xcolor}
\definecolor{highlight}{RGB}{245,245,200}  
\sethlcolor{highlight}
\usepackage{tablefootnote}

\definecolor{nored}{RGB}{190, 0, 0}    
\definecolor{yesgreen}{RGB}{0, 150, 0} 
\newcommand{\cmark}{\textcolor{yesgreen}{\ding{51}}}
\newcommand{\xmark}{\textcolor{nored}{\ding{55}}}
\newcommand{\softmask}[1]{\textcolor{brown!35!gray}{#1}}

\usepackage{xparse}

\NewDocumentEnvironment{proposition*}{o}
{%
  \par\medskip
  \noindent\textbf{Proposition\IfValueT{#1}{~#1}.}\itshape
}
{%
  \par\medskip
}

\usepackage[utf8]{inputenc} 
\usepackage[T1]{fontenc}    
\usepackage{hyperref}       
\usepackage{url}            
\usepackage{booktabs}       
\usepackage{amsfonts}       
\usepackage{nicefrac}       
\usepackage{microtype}      
\usepackage{xcolor}   
\usepackage{amsthm}

\newtheorem{proposition}{Proposition}
\usepackage{algorithm}
\usepackage{algpseudocode}

\newtheorem*{theorem*}{Theorem}
\usepackage{graphicx}

\newcommand{\ub}[1]{\textcolor{gray!100}{#1}}

\title{\textsc{DistMoE}: Private-data Rehearsal-free Routing in Mixture-of-Experts for Distributed Instruction Tuning}

\usepackage{xspace}
\DeclareRobustCommand{\method}{\textsc{DistMoE}\xspace}

\author{
\textbf{Mainak Singha}$^{1}$ \quad
\textbf{Niccol\`o Biondi}$^{1}$ \quad
\textbf{Elisa Ricci}$^{1,2}$ \quad
\textbf{Subhankar Roy}$^{3}$
\\[1.2ex]
$^{1}$University of Trento, Italy
\qquad
$^{2}$Fondazione Bruno Kessler, Italy
\\
$^{3}$University of Bergamo, Italy
\\[1.2ex]
\small
\texttt{
\{mainak.singha, niccolo.biondi, e.ricci\}@unitn.it, subhankar.roy@unibg.it
}
}

\begin{document}

\maketitle
\begin{abstract} 
\label{sec:abstract}
Multimodal Large Language Models (MLLMs) have shown strong multimodal instruction-following ability, but adapting them to diverse visual-language domains typically assumes centralized data access and costly joint training. This is restrictive when data is distributed across private, domain-specific, or permission-limited clients. To this end, we propose \method, a mixture-of-experts (MoE) approach for distributed visual instruction tuning. In each layer of the language decoder it augments the public feedforward network (FFN) with a client-specific private FFN expert, with the goal to acquire domain-specific knowledge. However, independent expert training causes the private FFNs to learn representation of different scale and magnitudes, making merging the experts difficult. To reduce client-specific drift, we introduce a public-anchored expert composition stage that updates only routers and lightweight private projection adapters on a mix of \textit{local} client data and public data, via an isotropic regularization loss, therefore making it \emph{cross-client rehearsal-free composition}. During inference, \method performs modular routing over public and private experts, enabling token-wise domain composition without explicit domain labels. Experiments across diverse visual-language benchmarks show that \method enables flexible expert reuse, effective domain adaptation, and competitive performance while preserving modular control over client-specific knowledge. Codes are available at \url{https://github.com/mainaksingha01/DistMoE}.
\end{abstract}

\section{Introduction}
\label{sec:introduction}

\begin{figure}[t]
    \centering
        \includegraphics[width=0.95\linewidth]{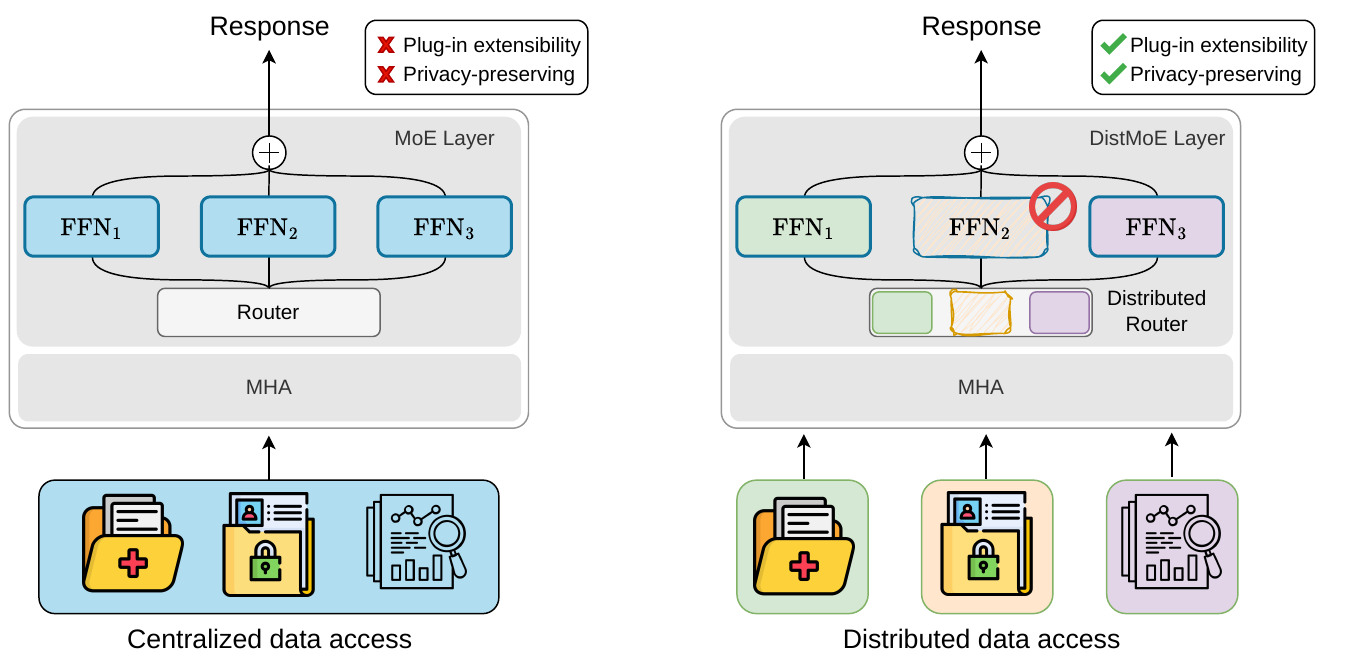}
    \caption{
Centralized MoE tuning vs. \method. \textbf{Left:} Centralized MoE methods \cite{moellava} rely on joint access to domain-specific data. \textbf{Right:} \method independently trains domain-specific private experts with distributed data access and composes them via a distributed router. This enables experts to be flexibly \emph{added}, \emph{removed}, or \emph{reused} at inference time depending on the application needs}
    \label{fig:teaser}
    \vspace{-0.4cm}
\end{figure}

Multimodal Large Language Models (MLLMs) \cite{caffagni2024revolution} have excelled in visual instruction following and reasoning by aligning visual encoders with Large Language Models (LLMs). These gains are often driven by scaling model size \cite{chen2024internvl,li2024llava} and augmenting visual instruction tuning data \cite{zhao2023svit,chen2024sharegpt4v}. Although impressive, such training paradigms present two challenges from training and deployment perspective: (i) a centralized access to all the training data, and (ii) high computation and memory costs during deployment. Thus, it calls for visual instruction tuning approaches that can be trained in a distributed manner and that result in models with a lower computational footprint during inference.

Mixture-of-Experts (MoE) approach \cite{sparsemoe,moe3,mixtral} presents a promising solution towards this goal, as opposed to dense models, it allows to scale models while activating only a subset of the parameters (or \textit{experts)} both during training and inference \cite{moellava}, as well as enables distributed training \cite{sukhbaatar2024branch}. For example, MoE-LLaVA~\cite{moellava}, MoE-based visual instruction tuning approach, have shown that sparse expert routing can improve MLLM efficiency while maintaining competitive visual understanding and reduced hallucination with fewer activated parameters than dense baselines. Despite the advantage, MoE-LLaVA assumes centralized access to training data and requires joint training of experts and routers (see Fig. \ref{fig:teaser}). This assumption is restrictive, as visual instruction tuning data can be privately maintained \cite{cui2024biomedical}. Moreover, once such data is utilized to jointly train a MoE model in a centralized setup, it becomes difficult to control, remove, and selectively disable its influence during inference.

A set of works \cite{sukhbaatar2024branch,flexolmo,morrison2026train} takes an important step towards flexible data use in LLMs, by independently training experts on private \textit{domain-specific} data slices (e.g., coding, maths, news) and merging them into a unified model through the MoE approach. In detail, a router is trained that learns to combine the domain-specific experts during inference. However, it still requires a \textit{rehearsal} of a subset of the private data from all domains to learn the router, which can still be a bottleneck is extremely privacy-sensitive applications. While FlexOlmo \cite{flexolmo} proposes a technique to merge the experts without further tuning the router, independent expert tuning can cause the experts to diverge making routing scores during inference not comparable. This issue can further be exacerbated in visual instruction tuning as it has been shown that a model can learn to ignore visual information \cite{zhou2025learning}. 

In this work, we propose Distributed MoE (\method), a \textit{cross-client rehearsal-free composition} MoE approach \cite{sparsemoe} for distributed visual instruction tuning (see Fig. \ref{fig:teaser}). The novelty of \method lies in how each expert is trained, so that it later facilitates seamless merging of experts without using any rehearsal data. In detail, following FlexOlmo, we first independently train \textit{private} experts, one per domain, by initializing it from a \textit{public} model, to acquire domain-specific knowledge. However, different from FlexOlmo, during expert-training, we introduce a lightweight and privacy-preserving approach to ensure that the private expert remains anchored to the public expert that is shared across all the clients. This is enforced by imposing a Gaussian structure \cite{lejepa} on the residuals between the public and private feature representations in the intermediate layers of the Transformer. In addition, we calibrate the router using a mix of public and strictly client-specific private data. As a result, the representation and routing scores across domains become comparable, making merging straightforward during inference. The native MoE architecture endows flexibility to \method by enabling computation-efficient sparse routing and seamless exclusion of sensitive domain-specific knowledge during inference.

Our \textbf{contributions} are: \textit{i)} We introduce \method, a MoE architecture-based approach, for distributed visual instruction tuning in MLLMs. \textit{ii)} To facilitate seamless composition of knowledge across experts, we introduce a novel \textit{cross-client rehearsal-free composition} technique that calibrates private experts and the associated router weights without relying on joint access to private data from all clients.  \textit{iii)}  We conducted experiments on several multimodal evaluation benchmarks to evaluate in-domain and cross-domain generalization. Our results demonstrate the effectiveness and flexibility of \method.

\section{Related Works}
\label{sec:related_works}

\noindent\textbf{Visual instruction tuning}, pioneered by LLaVA \cite{llava}, has emerged as a successful recipe for building multimodal LLMs from pre-trained models for addressing tasks such as visual question answering (VQA). It aligns visual encoder (e.g., CLIP \cite{radford2021learning}) with LLM decoder (e.g., Vicuna \cite{chiang2023vicuna}) using GPT generated instruction-following data \cite{caffagni2024revolution}. However, methods like LLaVA relies on the assumption that all the instruction tuning data is available simultaneously. Instead, in real-world applications, introduction of new tasks and domains requires the model to learn dynamically over time. This has shifted the research focus towards Continual Visual Instruction Tuning (CVIT) \cite{chen2024coin,chen2025sefe,zhao2026token, song2026kss, he2026continual, zhu2024model}, where the goal is to learn continually while avoiding catastrophic forgetting by employing regularization techniques. A characteristic assumption made in CVIT is that tasks arrive sequentially and therefore, the employed regularization techniques are specifically designed towards striking a balance between stability and plasticity. We argue that as training becomes more and more decentralized on heterogeneous hardware \cite{douillard2023diloco}, enforcing regularization while learning a new task is limiting and impractical. Moreover, as argued in the work \cite{nikandrou2022task}, very often task boundaries cannot be clearly demarcated in vision-language tasks. Thus, we seek a more general and flexible framework that can enable \textit{distributed} visual instruction tuning, where knowledge can be acquired independently and consolidated post-hoc without explicitly accounting for task type or arrival order.

\noindent\textbf{Model merging} provides a practical way to combine independently trained models into a unified system \cite{yang2026model} without requiring access to raw training data or expensive computation. Existing methods mainly include weight averaging or output ensembling. Weight averaging approaches, such as Model Soups \cite{wortsman2022model}, combine models fine-tuned from the same initialization, while output ensembling methods, such as Branch-Train-Merge (BTM) \cite{li2022branch}, merge predictions from models trained independently in different domains. While model merging inherently offers more flexibility than regularization-based continual learning approaches, model merging is prone to interference between parameters of constituent models, resulting in performance drops \cite{yadav2023ties}. To resolve this, additional steps are typically involved, such as selective merging or post-hoc fine-tuning. These additional steps introduce challenges in a purely distributed setup, where data is locally maintained.

\noindent\textbf{Mixture-of-Experts} (MoE) tuning, originally designed to scale deep networks \cite{sparsemoe} without increasing computation, dynamically selects feedforward networks or \textit{experts} using Top-K routing scheme that activates only a fraction of the total number of parameters. It has been successfully applied to scale LLM training \cite{mixtral, agarwal2025gpt}, vision-language tasks \cite{moe1,moe2,moe3,moellava} and privacy-preserving learning \cite{sukhbaatar2024branch,flexolmo,morrison2026train}. For instance, MoE-LLaVA~\cite{moellava} extends MoE tuning to visual instruction tuning through training experts on different domains, but requires joint access to the union of all expert-training datasets. Similar to our setup, more privacy-friendly approaches, such as Branch-Train-Mix (BTX) \cite{sukhbaatar2024branch}, FlexOlmo \cite{flexolmo} and BAR \cite{morrison2026train}, were proposed that first independently train models and later combine them using MoE routing. However, all these approaches are designed for LLMs and assume access to \textit{rehearsal} training data to learn the routing mechanism. In contrast, our proposed \method is specifically designed for distributed visual instruction tuning and is completely \textit{cross-client rehearsal-free composition}.
\section{Distributed Visual Instruction Tuning}
\label{sec:dist-vit}
\vspace{-0.1cm}

\subsection{Problem Formulation}
\label{subsec:problem_setup}

Let $\mathcal{D}= \{(I_i, q_i, y_i) \mid i \in \mathbb{N}\}$ denote a dataset of image, a textual query, and the target answer tuples. In visual instruction tuning (e.g., LLaVA \cite{llava}), the goal is to jointly train and/or fine-tune pre-trained LLM, vision transformer (ViT) \cite{radford2021learning} and a multimodal projector to predict the target $y$ given image and query $(I, q)$ pair at the input. The prediction is defined as:
\begin{equation}
\hat{y} = f_{\theta}(I, q) = f_{\theta_{\text{llm}}}\!\left(f_{\theta_{\text{mm}}}(f_{\theta_{\text{vis}}}(I)), q\right),
\end{equation}
where $f_{\theta_{\text{vis}}}$ is the vision encoder that processes the input image $I$, $f_{\theta_{\text{mm}}}$ is the multimodal projector, and $f_{\theta_{\text{llm}}}$ is the LLM decoder; and $\theta = \{\theta_{\text{vis}}, \theta_{\text{mm}}, \theta_{\text{llm}}\}$ are the parameters of the model $f_\theta$.

Similar to \cite{flexolmo}, in distributed visual instruction tuning, we assume that the instruction tuning dataset $\mathcal{D} = \{\mathcal{D}_1, \cdots, \mathcal{D}_M\}$ is  a collection of locally maintained datasets, spread among $M$ clients. Each client $m$ is considered to possess \textit{domain-specific} instruction tuning dataset $\mathcal{D}_m$ that is private. In addition, we assume access to a public dataset $\mathcal{D}_\text{pub}$ that is available to all clients. Our objective is to obtain a single model $f_\theta$ that is constructed by combining the knowledge of the clients, each independently trained on $\mathcal{D}_m$, without requiring joint access to the private datasets.

\subsection{Preliminaries}
\label{sec:prelim}

The idea of MoE \cite{sparsemoe,mixtral}, originally designed to scale a LLM's capacity without a proportional increase in computation, has been extended for visual instruction tuning in MoE-LLaVA \cite{moellava}. The main goal remains the same: constructing a sparse multimodal model with high learning capacity but with a constant computational cost. We note that this setup in MoE-LLaVA lends well while learning from diverse instruction tuning datasets as one Feed Forward Network (FFN) or \textit{expert}, in the LLM decoder, can specialize on each domain-specific dataset. Then, as in standard MoE, a learned router in the decoder can redirect the input tokens (both visual and text) to relevant domain experts.

More formally, let $\mathcal{V} = [v_1, \cdots, v_P]$ denote the sequence of $P$ tokens corresponding to the image $I$ from the multimodal projector $f_{\theta_{\text{mm}}}$, and $\mathcal{T} = [t_1, \cdots, t_N]$ be the sequence of $N$ text tokens corresponding to the input query $q$. These tokens are concatenated and fed to the LLM decoder, which comprise of stacked multi-head self-attention (MSA) and FFN blocks. MoE-LLaVA additionally consists of a MoE layer, comprised of multiple expert FFNs and a router. Denoting the expert FFNs as $\mathcal{E} = [e_1, \cdots, e_M]$; and the router, which is a linear layer, predicts the probability of each token being assigned to the $i-$th expert as:
\begin{equation}
\label{eqn:moe-softmax}
    \alpha_i = \text{softmax}(\mathbf{x})_i = \frac{\exp (g(\mathbf{x})_i)}{\sum^{M}_{j} \exp (g(\mathbf{x})_j)},
\end{equation}
where $\mathbf{x}$ denotes the intermediate representation of a token;  $g(\mathbf{x})_i = \mathbf{x}^\intercal \mathbf{r}_i$ and $\mathbf{r}_i$ are the logit score and router weight corresponding to the $i-$th FFN, respectively. For each token in a sequence, the top$-k$ highest probabilities are selected, which are then used to compute a weighted-sum representation as:
\begin{equation}
\label{eqn:moe}
    \mathbf{o} = \text{MoE}(\mathbf{x}) = \sum^{k}_{i=1} \alpha_i \cdot \mathcal{E}_i(\mathbf{x}),
\end{equation}
where $\mathbf{o}$ is the output of the MoE layer. MoE-LLaVA trains both the experts and the router with next-token prediction and router balancing loss, in a \textit{centralized} fashion, by having joint access to all the domain-specific datasets. However, joint access presents as a bottleneck in the distributed visual instruction tuning task. While in theory, each expert FFN $\mathcal{E}_m$ can be trained independently on $\mathcal{D}_m$, and later merged as in FlexOlmo \cite{flexolmo}, it still requires rehearsal of the private data to learn the router.

\begin{figure}[t]
    \centering
    \includegraphics[width=\linewidth]{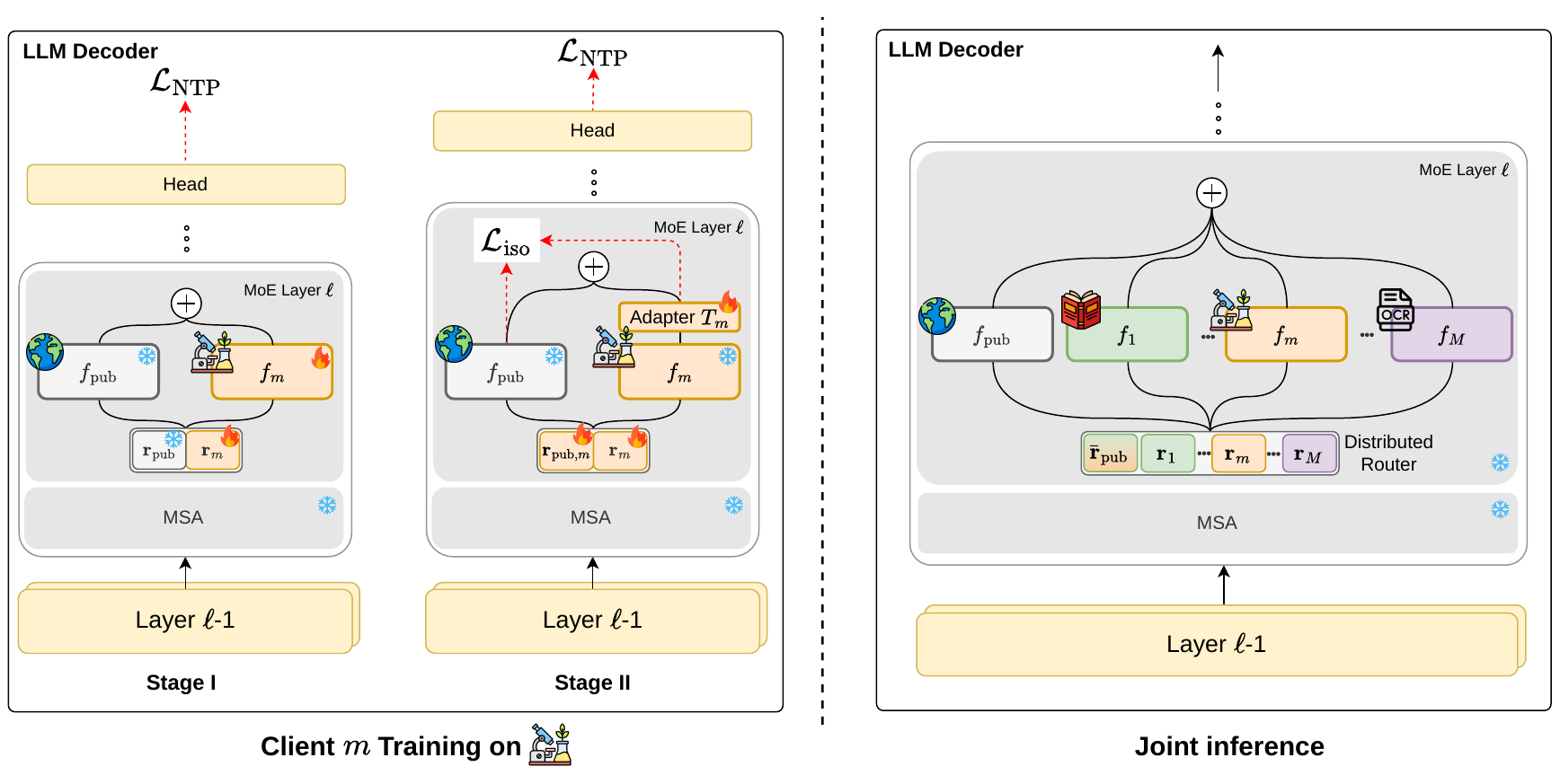}
    \caption{
    Overview of \method.
\textbf{Left}. In Stage-I each client independently trains a private FFN by initializing from a shared public FFN. 
In Stage-II we freeze the experts, and recalibrate routing using isotropic residual regularization and next-token prediction loss utilizing a mix of public and private data. 
\textbf{Right.} At inference, \method composes the shared public expert and calibrated private experts through sparse routing without explicit domain labels.
}
    \label{fig:main_arch_training}
    \vspace{-0.4cm}
\end{figure}

\section[\method: Distributed MoE]{\method: Private-data Rehearsal-free Routing in MoE MLLMs}
\label{sec:distmoe}

To support a more modular and privacy-preserving approach to visual instruction tuning, we propose \method, based on MoE architecture. The key idea behind \method is to train the experts on private data in a novel way, so that during inference the router does not need any additional training on the combined private data. We split the training phase in two stages: (i) In Stage-I (see Sec. \ref{sec: stage1}) we acquire domain-specific knowledge by training a "two-expert" MoE network, where we initialize each private expert $\mathcal{E}_m$ from a public FFN $\mathcal{E}_\text{pub}$ trained on generic public data. (ii) In Stage-II (see Sec. \ref{sec: stage2}) we calibrate routing of tokens, so that during inference, the experts can simply be merged to behave like a standard MoE router trained on all the domain-specific data jointly. Fig.~\ref{fig:main_arch_training} and Algorithm~\ref{alg:distmoe} in Appendix~\ref{sec:algorithm} summarizes the resulting training and inference procedure.

\vspace{-0.1cm}
\subsection{Training}
\label{sec: train-algo}
\vspace{-0.1cm}

Standard MoE training, as in MoE-LLaVA, trains experts on domain-specific datasets jointly. In the distributed setup, this is infeasible, and training experts in isolation can cause them to diverge from each other, making merging during inference difficult. While FlexOlmo proposes to use the public $\mathcal{E}_\text{pub}$ as an \textit{anchor} to induce a coordination effect among private experts, we observe that private experts can still drift from the anchor, and thus from one another. To address this, as shown in Fig.~\ref{fig:main_arch_training}, \method separates \emph{expert knowledge acquisition} (i.e., Stage-I) from \emph{expert composition} (i.e., Stage-II). Unless otherwise stated, private experts are trained within each layer of the decoder.

\subsubsection{Stage-I: Expert knowledge acquisition}
\label{sec: stage1}

Following FlexOlmo, we dedicate Stage-I to accomplish two goals: (i) acquire domain-specific knowledge on a private dataset $\mathcal{D}_m$, and (ii) prepare the experts to coordinate with each other. More specifically, we construct a MoE model with two experts: a public FFN $\mathcal{E}_\text{pub}$ and a private FFN $\mathcal{E}_\text{m}$ initialized from pre-trained $\mathcal{E}_\text{pub}$. During training, we freeze the parameters of the attention layers and $\mathcal{E}_\text{pub}$, and update only the private expert and its corresponding router parameters using the next-token prediction loss $\mathcal{L}_{\mathrm{NTP}}$ on $\mathcal{D}_m$. As a result, for a token with hidden state $\mathbf{x}$, the router learns to predict the probability of being assigned to the public or private expert. The MoE layer output is given as:
\begin{equation}
\label{eqn: two-moe-out}
    \mathbf{o}
    =
    \alpha_{\mathrm{pub}}
    \mathcal{E}_{\mathrm{pub}}(\mathbf{x})
    +
    \alpha_{m}
    \mathcal{E}_{m}(\mathbf{x}),
\end{equation}
where $\alpha_{\mathrm{pub}}$ and $\alpha_{m}$ are the probabilities of being assigned to the public and private experts, respectively. Intuitively, the public expert $\mathcal{E}_\text{pub}$ serves as a shared anchor for all experts, while and the two-way router learns a mechanism that enables their integration into a unified MoE architecture.

While Stage-I leads to knowledge acquisition, empirically, we found that this approach introduced by FlexOlmo still falls short of producing optimal routers for joint inference across all domains. We hypothesize that this occurs for two reasons. \textit{First}, the public router is never trained on public data, and thus, cannot sufficiently distinguish between information sources. \textit{Second}, Eq. (\ref{eqn: two-moe-out}) cannot guarantee that the outputs $\mathbf{o}$ from different clients will intrinsically have compatible magnitudes or directions. As a result of this, it makes the private router scores across clients not comparable.

\vspace{-0.1cm}
\subsubsection{Stage-II: Expert composition}
\label{sec: stage2}
\vspace{-0.1cm}

In Stage-II we address the mismatch of router scores across clients. While a straightforward approach is to collect a small subset of private data from each client to fine-tune the merged router \cite{flexolmo,sukhbaatar2024branch}, we do so \textit{without} collecting private data from other clients. We introduce a lightweight adapter $T_m(\cdot)$ on top of the private expert $\mathcal{E}_m$ with the aim to encourage the output of the private expert to be comparable with that of the public expert $\mathcal{E}_\text{pub}$. We intentionally choose the adapter to be \textit{linear} because here the goal is not to acquire new knowledge. In addition, we unlock the parameters of the two-way router in a client $m$ to recalibrate the routing between the public and private expert. All other parameters of the model are kept frozen, making this stage computationally lighter than Stage-I.

Let us denote $\mathbf{h}_{\mathrm{pub}}=\mathcal{E}_{\mathrm{pub}}(\mathbf{x})$ and 
$\widetilde{\mathbf{h}}_{m}=T_m(\mathcal{E}_{m}(\mathbf{x}))$ be the representation of the public expert and the projected representation of the private expert, respectively. We define the \textit{client residual} as:
\begin{equation}
    \boldsymbol{\Delta}_m(\mathbf{x})
    =
    \widetilde{\mathbf{h}}_{m}
    -
    \mathbf{h}_{\mathrm{pub}}.
\label{eq:client_residual}
\end{equation}

The client residual measures how a private expert output deviates from the shared public expert after linear projection. Ideally, we want the residuals to have a comparable scale across clients. Inspired by LeJEPA~\cite{lejepa} that imposes isotropic Gaussian structure on learned representations, we impose a common residual distribution across clients by forcing the residual in each client to follow an istropic Gaussian distribution. This leads the private experts trained on different datasets produce outputs with comparable residual scale around the same shared public expert. Formally, let $\mathcal{B}_m$ denote the token-level residuals in a mini-batch of client $m$. For sampled unit directions $\{\mathbf{a}_s\}_{s=1}^{S}$, we define
\begin{equation}
    \mathcal{L}_{\mathrm{iso}}
    =
    \frac{1}{S}
    \sum_{s=1}^{S}
    \mathrm{EP}
    \left(
        \{\mathbf{a}_s^\top \boldsymbol{\Delta}_{m}(\mathbf{x}) : \mathbf{x}\in\mathcal{B}_m\},
        \mathcal{N}(0,1)
    \right),
\label{eq:iso_loss}
\end{equation}
where $\mathrm{EP}(\cdot,\mathcal{N}(0,1))$ is the Epps--Pulley discrepancy between the projected residuals and a standard Gaussian\footnote{Given $\{u_i\}_{i=1}^{B}$, 
$\mathrm{EP}=B\int |\widehat{\varphi}(t)-\varphi_0(t)|^2\varphi_0(t)\,dt$, where 
$\widehat{\varphi}(t)=\frac{1}{B}\sum_{i=1}^{B}\exp(\mathrm{i}tu_i)$ and 
$\varphi_0(t)=\exp(-t^2/2)$.}. Because all clients are calibrated against the same public expert and isotropic Gaussian target, \method makes the private experts more comparable for distributed MoE composition.

In addition to imposing Gaussian structure on the residuals, we recalibrate the router by fine-tuning the router parameters corresponding to both the public and private expert. As we formalize later in Proposition~\ref{prop:residual_bound}, this is crucial because inference-time expert influence is controlled by both the client-residual magnitude and recalibrated routing scores. Formally, the Stage-II objective is
\begin{equation}
\label{eq:stage2_objective}
    \mathcal{L}_{\mathrm{Stage\text{-}II}}
    =
    \mathcal{L}_{\mathrm{NTP}}
    +
    \lambda
    \mathcal{L}_{\mathrm{iso}},
\end{equation}
where both $\mathcal{L}_{\mathrm{NTP}}$ and $\mathcal{L}_{\mathrm{iso}}$ are optimized on a mixed dataset $\mathcal{D}^{\mathrm{mix}}_m = \mathcal{D}_m \cup \mathcal{D}_{\mathrm{pub}}$, where $\mathcal{D}_\text{pub}$ is small subset of the publicly available dataset used for training the public model; and $\lambda$ is a hyperparameter for balancing the two losses. Since, $\mathcal{D}^{\mathrm{mix}}_m$ does not contain private data from other clients, expert composition in \method remains rehearsal-free, unlike the prior works \cite{flexolmo,sukhbaatar2024branch,morrison2026train}.

\vspace{-0.1cm}
\subsection{Inference}
\label{subsec:inference}
\vspace{-0.1cm}

During inference, we employ a standard MoE approach to combine the expert models $\mathcal{E}_m$ along with the public expert $\mathcal{E}_\text{pub}$. After Stage-II, each client $m$ contributes a calibrated private expert $\mathcal{E}_m$, router weights $\mathbf{r}_m$ and bias $b_{m}$, and a linear adapter $T_m(\cdot)$, while the shared public expert remains common across clients. Note that as the public expert is shared among clients, each client possesses their own public router weights $\mathbf{r}_{\mathrm{pub},m}$. Therefore, we average the public-route parameters learned during Stage-II as $\bar{\mathbf{r}}_{\mathrm{pub}}=\frac{1}{M}\sum_{m=1}^{M}\mathbf{r}_{\mathrm{pub},m}$. As our method is based on the MoE approach, it also supports different types of routing during inference, such as sparse top-$K$ routing (as noted in Eq. (\ref{eqn:moe})) and soft routing involving all experts.

\subsection{Discussion} 
Next, we discuss why Stage-II can alleviate the issue highlighted in Sec. \ref{sec: stage1}.
Following the MoE routing rule in Eq.~\ref{eqn:moe-softmax}, we denote by 
$\alpha_{\mathrm{pub}}$ and $\{\alpha_m\}_{m=1}^{M}$ the routing weights assigned to the public expert and the $M$ private experts, respectively. 
Under sparse top-$K$ routing, non-selected routes are assigned zero weight. 
The routed output is
\vspace{-0.3cm}
\begin{equation}
    \mathbf{o}
    =
    \alpha_{\mathrm{pub}}
    \mathbf{h}_{\mathrm{pub}}
    +
    \sum_{m=1}^{M}
    \alpha_m
    \widetilde{\mathbf{h}}_{m}.
\label{eq:distmoe_inference_output}
\end{equation}
Using the client residual 
$\boldsymbol{\Delta}_{m}
=
\widetilde{\mathbf{h}}_{m}
-
\mathbf{h}_{\mathrm{pub}}$, 
and the normalization 
$\alpha_{\mathrm{pub}}+\sum_{m=1}^{M}\alpha_m=1$, 
Eq.~\ref{eq:distmoe_inference_output} can be rewritten as
\begin{equation}
    \mathbf{o}
    =
    \mathbf{h}_{\mathrm{pub}}
    +
    \sum_{m=1}^{M}
    \alpha_m
    \boldsymbol{\Delta}_{m}.
\label{eq:residual_decomposition}
\end{equation}
The algebraic derivation of Eq.~\ref{eq:residual_decomposition} is provided in Appendix~\ref{app:residual_bound}. 
This equation shows why Stage-II regularizes the client residuals rather than the full expert outputs. 
At inference, the private experts do not contribute as isolated outputs; they contribute through their residuals relative to the shared public expert. 
Therefore, the effect of expert $m$ is determined by two factors: the routing weight 
$\alpha_{m}$, which is learned by the router, and the residual 
$\boldsymbol{\Delta}_{m}$, which determines the direction and magnitude of the representation shift introduced by that expert. 
If two clients receive similar routing weights but produce residuals with very different norms, the larger-residual expert can dominate the mixture regardless of whether it is semantically more relevant.

\begin{proposition}
\label{prop:residual_bound}
For the routed output in Eq.~\ref{eq:residual_decomposition}, the deviation from the shared public expert is bounded by the routing-weighted residual magnitudes:
\vspace{-0.3cm}
\begin{equation}
    \left\|
        \mathbf{o}
        -
        \mathbf{h}_{\mathrm{pub}}
    \right\|_{2}
    \leq
    \sum_{m=1}^{M}
    \alpha_m
    \left\|
        \boldsymbol{\Delta}_{m}
    \right\|_{2}.
\label{eq:residual_bound}
\end{equation}
\end{proposition}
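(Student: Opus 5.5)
The plan is to derive the residual decomposition in Eq.~\ref{eq:residual_decomposition} explicitly from Eq.~\ref{eq:distmoe_inference_output}, and then bound the resulting sum by the triangle inequality together with absolute homogeneity of the Euclidean norm. The only structural facts needed about the routing weights are that they are nonnegative and sum to one over the public and private routes. This holds both for soft routing, since the weights are softmax outputs as in Eq.~\ref{eqn:moe-softmax}, and for sparse top-$K$ routing. In the sparse case, non-selected routes receive weight zero and the selected weights are renormalized over the chosen routes, so the paper's normalization $\alpha_{\mathrm{pub}}+\sum_{m=1}^{M}\alpha_m=1$ is preserved.

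First, substitute $\alpha_{\mathrm{pub}} = 1-\sum_{m=1}^{M}\alpha_m$ into Eq.~\ref{eq:distmoe_inference_output}. This gives $\mathbf{o} = \mathbf{h}_{\mathrm{pub}} + \sum_{m}\alpha_m(\widetilde{\mathbf{h}}_m - \mathbf{h}_{\mathrm{pub}})$. By the definition of the client residual in Eq.~\ref{eq:client_residual}, this is exactly Eq.~\ref{eq:residual_decomposition}, and hence
\begin{equation*}
\mathbf{o}-\mathbf{h}_{\mathrm{pub}}=\sum_{m=1}^{M}\alpha_m\boldsymbol{\Delta}_m .
\end{equation*}
Second, take the $\ell_2$ norm and apply the triangle inequality over the $M$ summands, which gives $\|\sum_m \alpha_m\boldsymbol{\Delta}_m\|_2 \le \sum_m \|\alpha_m\boldsymbol{\Delta}_m\|_2$. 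Then use $\|\alpha_m\boldsymbol{\Delta}_m\|_2 = |\alpha_m|\,\|\boldsymbol{\Delta}_m\|_2 = \alpha_m\|\boldsymbol{\Delta}_m\|_2$, where the last equality uses $\alpha_m\ge 0$. This yields Eq.~\ref{eq:residual_bound}.

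There is no real analytic obstacle. The one point that needs care is justifying the normalization and nonnegativity of the $\alpha$'s under top-$K$ routing. If the implementation does \emph{not} renormalize the selected weights, so that the retained weights sum to $s\le 1$, the decomposition instead reads $\mathbf{o} - \mathbf{h}_{\mathrm{pub}} = (s-1)\mathbf{h}_{\mathrm{pub}} + \sum_m\alpha_m\boldsymbol{\Delta}_m$, which carries an extra term. In that case I would state the proposition under the normalization assumption that the paper already invokes for Eq.~\ref{eq:residual_decomposition}, and I would note that nonnegativity is automatic from the softmax.
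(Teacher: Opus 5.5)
Your proposal is correct and follows essentially the same argument as the paper's proof in Appendix~\ref{app:residual_bound}. Both use the normalization of the routing weights to get $\mathbf{o}-\mathbf{h}_{\mathrm{pub}}=\sum_{m}\alpha_m\boldsymbol{\Delta}_m$, then apply the triangle inequality and the nonnegativity of $\alpha_m$. The only cosmetic difference is that you eliminate $\alpha_{\mathrm{pub}}$ instead of substituting $\widetilde{\mathbf{h}}_m=\mathbf{h}_{\mathrm{pub}}+\boldsymbol{\Delta}_m$, and your caveat about non-renormalized top-$K$ weights correctly flags an assumption the paper simply takes as given.
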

\vspace{-0.2cm}

The proof follows from Eq.~\ref{eq:residual_decomposition}, the triangle inequality, and the non-negativity of the routing weights; the full proof is given in Appendix~\ref{app:residual_bound}. 
Proposition~\ref{prop:residual_bound} formalizes the main reason for the isotropic residual loss. 
The router learns how much mass to assign to each private expert, but the actual representation shift also depends on 
$\|\boldsymbol{\Delta}_{m}\|_2$. 
If different clients produce residuals with incompatible scales, the mixture can be influenced by representation magnitude rather than by the learned routing weights alone. 
By regularizing client residuals around the same public expert and the same isotropic Gaussian target, Stage-II makes these residuals more comparable while preserving client specialization. 
This supports cross-client rehearsal-free composition of independently trained experts in the final distributed MoE model.
\vspace{-0.1cm}
\section{Experimental Results}
\label{sec:experiments}
\vspace{-0.1cm}

\noindent\textbf{Experimental setup.}
We train \method with five client domains: COCO~\cite{lin2014microsoft}, GQA~\cite{hudson2019gqa}, OCRVQA~\cite{8978122}, VQA$^{\mathrm{T}}$: TextVQA~\cite{singh2019towards}, and Visual Genome (VG)~\cite{krishna2017visual}, treating each dataset as a separate private client. 
We initialize the shared public model from the MoE-LLaVA stage-2~\cite{moellava} checkpoint and keep the vision encoder and multimodal connector frozen. 
In Stage-II, each client calibrates routers/adapters using its private data plus a fixed 19k public subset.
We instantiate \method with Qwen-1.8B~\cite{bai2023qwen}, and Phi-2.7B~\cite{javaheripi2023phi2}, and StableLM-1.6B~\cite{bellagente2024stable} language backbones; all variants use CLIP ViT-L/14~\cite{radford2021learning} as the frozen vision encoder. 
We evaluate on image QA benchmarks, including VQA$^{\mathrm{v2}}$~\cite{goyal2017making}, GQA, SQA$^{\mathrm{I}}$: ScienceQA-IMG~\cite{lu2022learn}, and VQA$^{\mathrm{T}}$, as well as general MLLM benchmarks, including POPE~\cite{li2023evaluating}, MME~\cite{fu2023mme}, SEED-Bench~\cite{li2023seed}, LLaVA$^{\mathrm{W}}$: LLaVA-Bench (in-the-Wild)~\cite{llava}, and MM-Vet~\cite{yu2023mm}. 
This evaluation covers both in-domain performance on clients' training datasets and out-of-domain generalization on different multimodal benchmarks. 
Additional implementation details are provided in Appendix~\ref{sec:additional_experimental_details}.

\noindent\textbf{Compared methods.}
The Public model denotes the MoE-LLaVA stage-2 checkpoint used to initialize the MLLM before client-specific expert adaptation. 
Offline reference models, such as Qwen-VL, LLaVA-Phi, and MoE-LLaVA, are included as centralized reference points. They represent the upperbound of our distributed approach. 
We report two MoE-LLaVA references: the original results reported by MoE-LLaVA~\cite{moellava} and a variant trained with joint access to all client datasets. 
We compare \method against Model Soup, weighted Model Soup, BTM, and BTX using their standard model-merging or expert-composition procedures. 
We also report the FlexOlmo-style routing baselines, denoted as FlexOlmo-LoRA and FlexOlmo, which correspond to Stage-I expert acquisition without the Stage-II public-anchored router and adapter calibration. 
Following \cite{flexolmo}, we additionally report FlexOlmo-RT, which fine-tunes the merged router after Stage-I using 10k private samples per client dataset plus the same 19k public subset used in our Stage-II. 
This rehearsal-based baseline serves as an upper-bound reference rather than a rehearsal-free distributed baseline.
Finally, we include two variants of \method: a variant that adapts one private FFN expert per client, and a parameter-efficient LoRA variant that replaces each full private FFN with multiple lightweight LoRA experts. 
For the LoRA variants, we use $R=16$ LoRA experts per client with rank $r=16$ and scaling $\alpha=32$. 
Unless otherwise stated, we use top-$k$ routing with $k=4R$ for LoRA variants and $k=4$ for all other distributed methods; the effect of varying $R$ is analyzed in Appendix~\ref{sec:lora}.

\begin{table*}[t]
\vspace{-0.1cm}
\centering
\caption{Main results for \method with Qwen-1.8B and Phi-2.7B. 
MoE-LLaVA$^\dagger$ denotes the original results reported in~\cite{moellava}, while MoE-LLaVA denotes the model trained with joint access to the datasets used in our distributed setting. FlexOlmo-RT denotes FlexOlmo with Router Training~\cite{flexolmo}.
}
\label{tab:main_all_backbones}
\vspace{-0.1cm}
\setlength{\tabcolsep}{4pt}
\footnotesize
\resizebox{\textwidth}{!}{%
\begin{tabular}{ll cccc>{\hspace{8pt}} ccccc}
\toprule
\textbf{LLM} & \textbf{Method}
& \multicolumn{4}{c}{\textbf{Image Question Answering}}
& \multicolumn{5}{c}{\textbf{Benchmark Toolkit}} \\
\cmidrule(lr){3-6} \cmidrule(lr){7-11}
& & VQA$^{\mathrm{v2}}$ & GQA & SQA$^{\mathrm{I}}$ & VQA$^{\mathrm{T}}$
& POPE & MME & SEED & LLaVA$^{\mathrm{W}}$ & MM-Vet \\
\midrule

\multicolumn{1}{l}{} & \multicolumn{10}{l}{\textbf{\textit{Offline reference models (upperbound)}}} \\
\cellcolor{white}\multirow{12}{*}{Qwen-1.8B}
& \ub{Qwen-VL}
& \ub{78.80} & \ub{59.30} & \ub{67.10} & \ub{63.80}
& \ub{--} & \ub{--} & \ub{--} & \ub{--} & \ub{--} \\

& \ub{MoE-LLaVA$^\dagger$}
& \ub{76.20} & \ub{61.50} & \ub{63.10} & \ub{48.00}
& \ub{87.00} & \ub{1291.60} & \ub{--} & \ub{88.70} & \ub{25.30} \\

& \ub{MoE-LLaVA}
& \ub{72.81} & \ub{57.29} & \ub{63.22} & \ub{44.15}
& \ub{89.32} & \ub{1230.16} & \ub{57.08} & \ub{62.16} & \ub{15.61} \\

& \ub{FlexOlmo-RT}
& \ub{72.36} & \ub{56.66} & \ub{62.17} & \ub{45.19}
& \ub{87.85} & \ub{1205.85} & \ub{56.51} & \ub{64.00} & \ub{19.70} \\

\cmidrule(lr){2-11}
& Public model
& 20.96 & 13.75 & 58.50 & 17.72
& 71.17 & 402.54 & 51.17 & 71.00 & 18.60 \\
& Model Soup (avg)
& 68.92 & 53.18 & 59.98 & 40.83 & 84.55 & 1154.80 & 53.82 & 50.40 & 14.60 \\
& Model Soup (wtd)
& 69.18 & 53.46 & 60.21 & 41.18 & 84.84 & 1160.35 & 54.06 & 51.10 & 14.70 \\
& BTM
& 69.73 & 53.81 & 60.66 & 41.87 & 85.46 & 1167.92 & 54.31 & 52.00 & 14.80 \\
& BTX
& 69.46 & 54.16 & 60.43 & 41.52 & 85.15 & 1174.63 & 54.63 & 52.90 & 14.90 \\

& FlexOlmo-LoRA
& 63.45 & 50.28 & 59.15 & 34.46 & 85.35 & 938.50 & 53.20 & 72.00 & 21.40 \\
& FlexOlmo
& 70.01 & 54.26 & 60.85 & 41.70 & 85.98 & 1176.35 & 54.91 & 52.20 & 15.00 \\
\rowcolor{yesgreen!8}
\cellcolor{white}& \method-LoRA
& 65.80 & 50.51 & 59.30 & 33.69 & 85.68 & 995.61 & 53.34 & \textbf{73.50} & \textbf{24.40} \\
\rowcolor{yesgreen!8}
\cellcolor{white}& \method
& \textbf{70.27} & \textbf{55.35} & \textbf{61.03} & \textbf{42.66}
& \textbf{86.38} & \textbf{1188.56} & \textbf{55.11} & 59.90 & 15.10 \\

\midrule
\midrule

\multicolumn{1}{l}{} & \multicolumn{10}{l}{\textbf{\textit{Offline reference models (upperbound)}}} \\
\cellcolor{white}\multirow{12}{*}{Phi-2.7B}
& \ub{LLaVA-Phi}
& \ub{71.40} & \ub{--} & \ub{68.40} & \ub{48.60}
& \ub{85.00} & \ub{1335.10} & \ub{--} & \ub{--} & \ub{28.90} \\

& \ub{MoE-LLaVA}$^\dagger$
& \ub{77.60} & \ub{61.40} & \ub{68.50} & \ub{51.40}
& \ub{85.70} & \ub{1423.00} & \ub{--} & \ub{94.10} & \ub{34.30} \\

& \ub{MoE-LLaVA}
& \ub{72.81} & \ub{57.29} & \ub{63.22} & \ub{44.15}
& \ub{89.32} & \ub{1230.16} & \ub{57.08} & \ub{62.16} & \ub{15.61} \\

\cmidrule(lr){2-11}
& Public model
& 42.87 & 32.80 & 69.26 & 31.82
& 80.48 & 862.53 & 59.36 & 73.30 & 23.20 \\

& Model Soup (avg)
& 64.89 & 51.67 & 59.28 & 39.57 & 74.17 & 1054.37 & 53.38 & 42.40 & 15.70 \\

& Model Soup (wtd)
& 65.92 & 52.34 & 59.89 & 39.76 & 74.43 & 1067.26 & 54.05 & 43.20 & 14.40 \\

& BTM
& 66.53 & 52.56 & 60.56 & 40.76 & 72.80 & 1106.64 & 52.82 & 41.70 & 16.30 \\

& BTX
& 67.80 & 52.67 & 60.24 & 40.53 & 73.62 & 1096.48 & 51.43 & 44.40 & 17.00 \\

& FlexOlmo-LoRA
& 42.87 & 32.78 & 69.41 & 31.74 & 80.53 & 862.53 & 59.39 & 72.10 & 22.30 \\

& FlexOlmo
& 70.03 & 54.84 & 62.77 & 41.63 & 74.98 & 1193.71 & 58.79 & 52.00 & 18.30 \\

\rowcolor{yesgreen!8}
\cellcolor{white}& \method-LoRA
& 42.89 & 32.84 & \textbf{69.52} & 31.77
& \textbf{80.84} & 862.60 & 59.38 & \textbf{75.30} & \textbf{23.80} \\

\rowcolor{yesgreen!8}
\cellcolor{white}& \method
& \textbf{71.31} & \textbf{56.53} & 66.34 & \textbf{41.87}
& 80.50 & \textbf{1291.41} & \textbf{59.96} & 55.90 & 19.80 \\

\bottomrule
\end{tabular}%
}
\vspace{-0.3cm}
\end{table*}

\begin{figure*}[t]
\vspace{-0.3cm}
    \centering

    \begin{subfigure}[t]{0.48\textwidth}
        \centering
        \includegraphics[width=\linewidth]{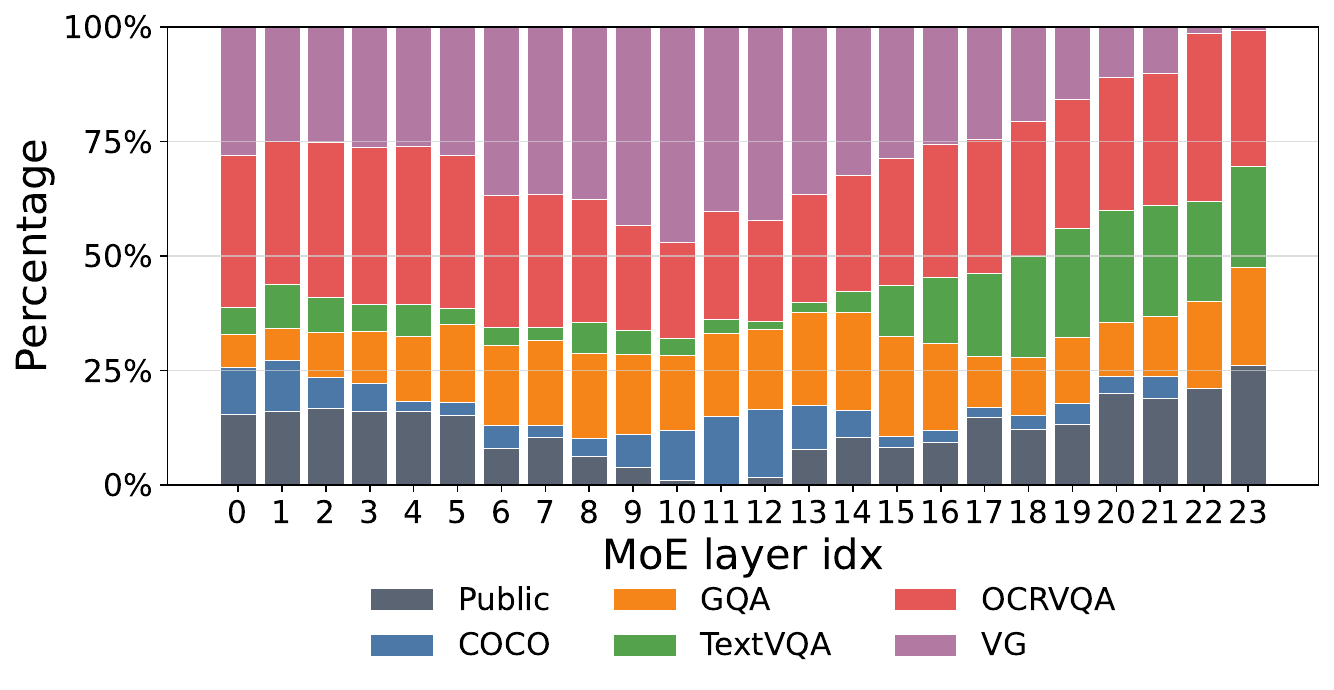}
        \caption{FlexOlmo (perception score: 1176.35).}
        \label{fig:mme_distmoe_routing_text}
    \end{subfigure}
    \hfill
    \begin{subfigure}[t]{0.48\textwidth}
        \centering
        \includegraphics[width=\linewidth]{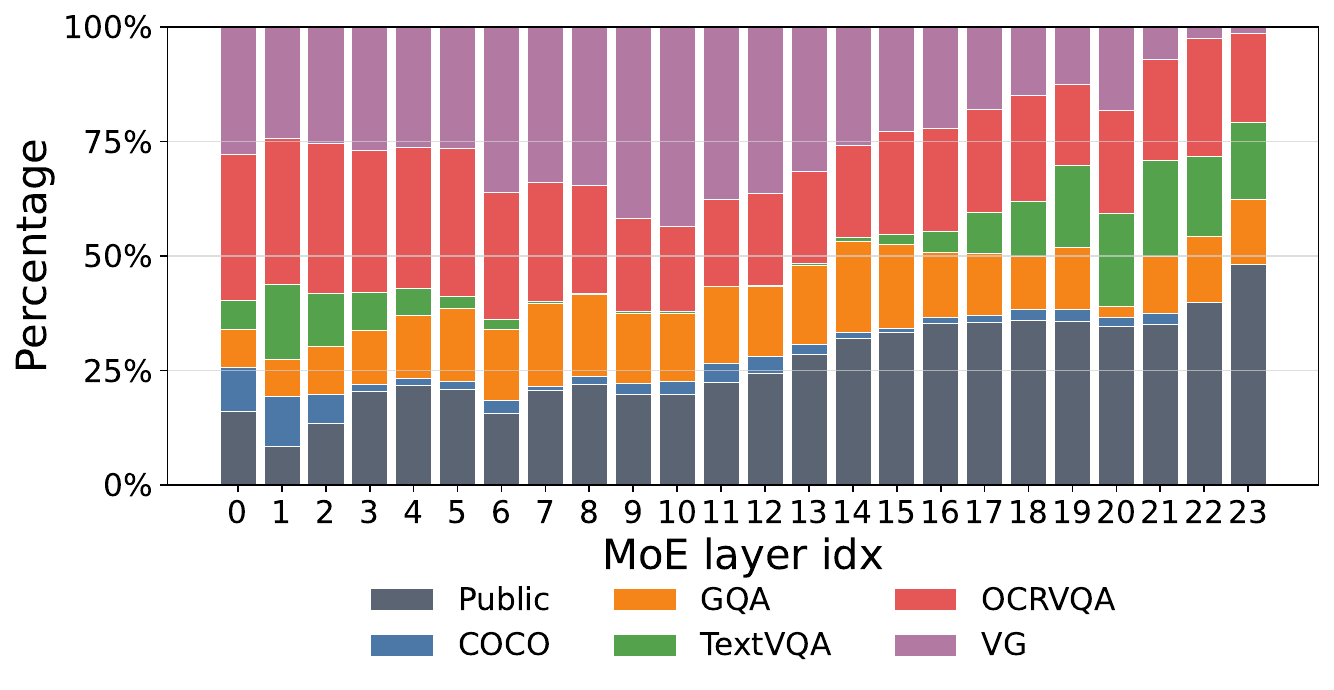}
        \caption{\method (perception score: 1188.56).}
        \label{fig:mme_distmoe_routing_image}
    \end{subfigure}

    \caption{Top-4 routing mass on MME with Qwen-1.8B, computed as the aggregate normalized router weight assigned to each expert. 
    Compared with FlexOlmo, \method assigns more routing mass to the public expert while activating private experts, supporting the effect of Stage-II.}
    \label{fig:routing_mme}
\end{figure*}

\vspace{-0.1cm}
\subsection{Results}
\vspace{-0.1cm}

Tab.~\ref{tab:main_all_backbones} shows that \method achieves strong performance across both image question answering and instruction-following MLLM benchmarks. 
Across Qwen-1.8B and Phi-2.7B, \method generally provides the strongest results on image QA benchmarks, indicating that public-anchored routing and residual calibration improve the reuse of independently trained visual-language experts without centralized training. 
\method-LoRA is also highly competitive despite its lightweight adaptation, and performs especially well on instruction-following benchmarks such as LLaVA$^{\mathrm{W}}$ and MM-Vet.

FlexOlmo-RT improves over rehearsal-free FlexOlmo by fine-tuning the merged router with pooled private rehearsal data, confirming the value of cross-client router supervision but violating our rehearsal-free setting. 
Compared with rehearsal-free FlexOlmo, \method benefits from Stage-II public-private calibration, suggesting that flexible expert composition for MLLMs requires calibration mechanisms tailored to multimodal representations rather than a direct transfer of text-only routing.
Fig.~\ref{fig:routing_mme} further illustrates this behavior.
FlexOlmo assigns relatively less routing mass to the public expert and relies more heavily on private experts, which can favor client-specific specialization but weakens the shared public expert that carries broadly trained visual-language knowledge. 
In contrast, \method assigns substantially more mass to the public expert, especially in later layers, while still activating private experts when useful. 
This behavior is consistent with the residual view of \method: private experts contribute calibrated shifts around a shared public anchor, rather than replacing the public expert.
As a result, \method better preserves out-of-domain and general instruction-following capability while retaining client-specific expert reuse. \\
Model merging baselines such as Model Soup, BTM, and BTX remain competitive on some QA tasks, but their static parameter-level combination lacks token-wise expert selection and modular control over client-specific knowledge. 
Offline reference models provide upperbound comparisons, but they assume centralized training or joint access to data and therefore do not address the distributed, rehearsal-free setting targeted by \method.

Appendices~\ref{sec:additional_results} and~\ref{sec:routing_analysis} report additional benchmark results and routing visualizations.

\subsection{Ablation Studies}
\label{sec:ablations}
\begin{table}[!t]
\centering
\begin{minipage}[t]{0.48\textwidth}
  \centering
  \footnotesize
  \caption{Effect of Stage-II components in \method. Stage-I is included as a reference.}
  \label{tab:components}
  \setlength{\tabcolsep}{3pt}       
  \renewcommand{\arraystretch}{1}
  \begin{tabular}{c ccc cccc}
    \toprule
    Stage & $T(\cdot)$ & $\mathcal{L}_{\mathrm{iso}}$ & $b_{m}$
    & GQA & SQA & VQA$^{\rm T}$ & POPE \\
    \midrule
    I & -- & -- & -- & 54.26 & 60.85 & 41.70 & 85.98 \\
    \midrule
    II & \xmark & \xmark & \xmark & 54.21 & 60.42 & 41.71 & 85.54 \\
    II & \cmark & \xmark & \xmark & 54.39 & 60.55 & 41.88 & 85.67 \\
    II & \cmark & \cmark & \xmark & 54.62 & 60.71 & 42.08 & 85.82 \\
    II & \xmark & \xmark & \cmark & 54.93 & 60.96 & 42.52 & 86.08 \\
    II & \cmark & \xmark & \cmark & 55.16 & 60.88 & 42.35 & 86.24 \\
    \rowcolor{yesgreen!8}
    II & \cmark & \cmark & \cmark & \textbf{55.35} & \textbf{61.03} & \textbf{42.66} & \textbf{86.38} \\
    \bottomrule
  \end{tabular}
\end{minipage}
\hfill
\begin{minipage}[t]{0.48\textwidth}
  \centering
  \footnotesize
  \caption{Effect of removing individual experts from \method.}
  \label{tab:experts}
  \setlength{\tabcolsep}{5pt}       
  \renewcommand{\arraystretch}{1}
  \begin{tabular}{l cccc}
    \toprule
    Expert & GQA & SQA & VQA$^{\rm T}$ & POPE \\
    \midrule
    Public model & 13.75 & 58.50 & 17.72 & 71.17 \\
    \midrule
    w/o COCO    & 54.51	&60.29	&41.14	&\textbf{86.93} \\
    w/o GQA     & 43.37	&61.53	&34.82	&85.95 \\
    w/o OCRVQA  & \textbf{56.28}	&58.55	&40.66	&86.80 \\
    w/o VG      & 56.23	&59.94	&41.81	&86.14 \\
    w/o TextVQA & 55.69	&\textbf{62.42}	&\textbf{45.19}	&84.37 \\
    \rowcolor{yesgreen!8}
    \method        & 55.35 & 61.03 & 42.66 & 86.38 \\
    \bottomrule
  \end{tabular}
\end{minipage}
\vspace{-0.5cm}
\end{table}

\noindent\textbf{Model components.}
Tab.~\ref{tab:components} ablates the Stage-II components of \method: the client-private adapter $T(\cdot)$, isotropic residual loss $\mathcal{L}_{\mathrm{iso}}$, and client-private bias $b_m$. 
The Stage-I row is the reference before Stage-II calibration; Stage-II without these components does not improve over it, showing that mixed-data router calibration alone is insufficient. 
The full configuration performs best across benchmarks, showing that residual alignment and router calibration are complementary.

\begin{wrapfigure}[8]{r}{0.4\textwidth}
\vspace{-0.6cm}
  \centering
    \includegraphics[width=0.78\linewidth]{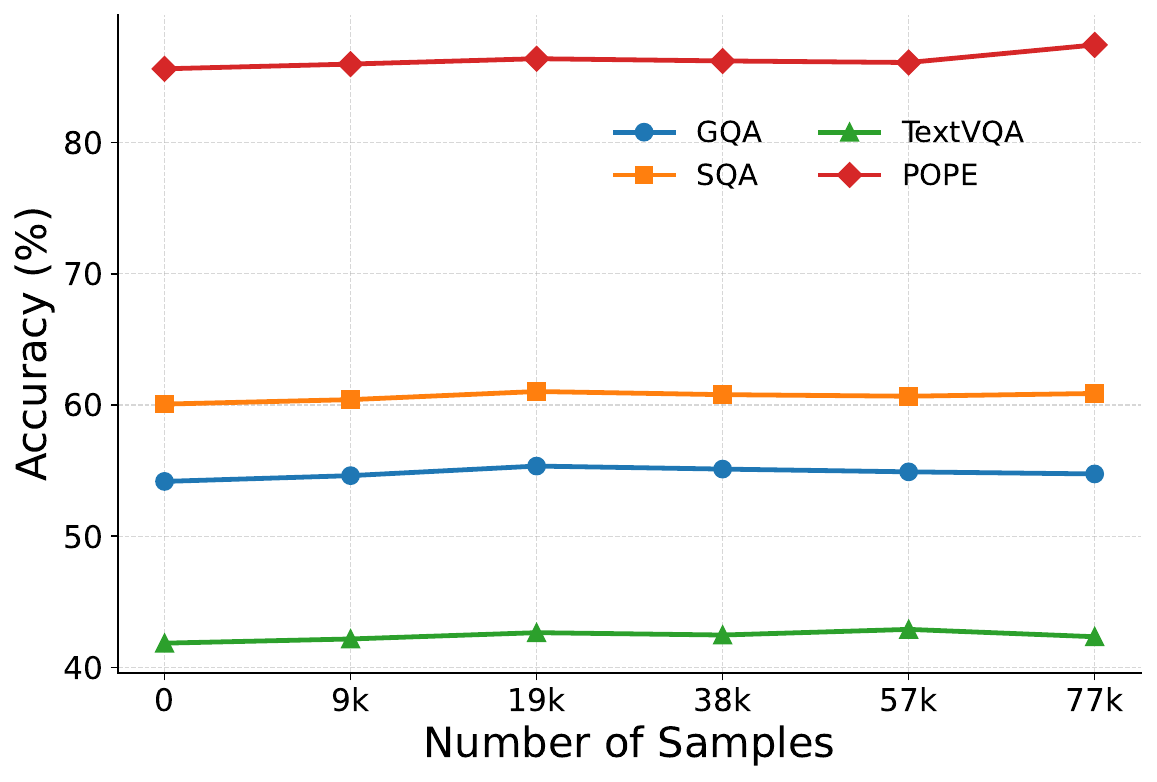}
    \vspace{-0.2cm}
    \caption{Public-data size in Stage-II.}
    \label{fig:stage2_public_data}
\end{wrapfigure}
\noindent\textbf{Effect of public-data anchoring.}
Fig.~\ref{fig:stage2_public_data} varies the number of public samples used in Stage-II. 
Without public data, routers and adapters are calibrated only on private data, which can reduce compatibility with the shared MLLM. 
Adding public samples improves or stabilizes performance, and the gains quickly saturate, showing that a small public subset is sufficient for effective public-private calibration.
In \method Stage-II, we used 19k public samples.

\begin{wrapfigure}[10]{r}{0.4\textwidth}
\vspace{-0.3cm}
  \centering
    \includegraphics[width=0.78\linewidth]{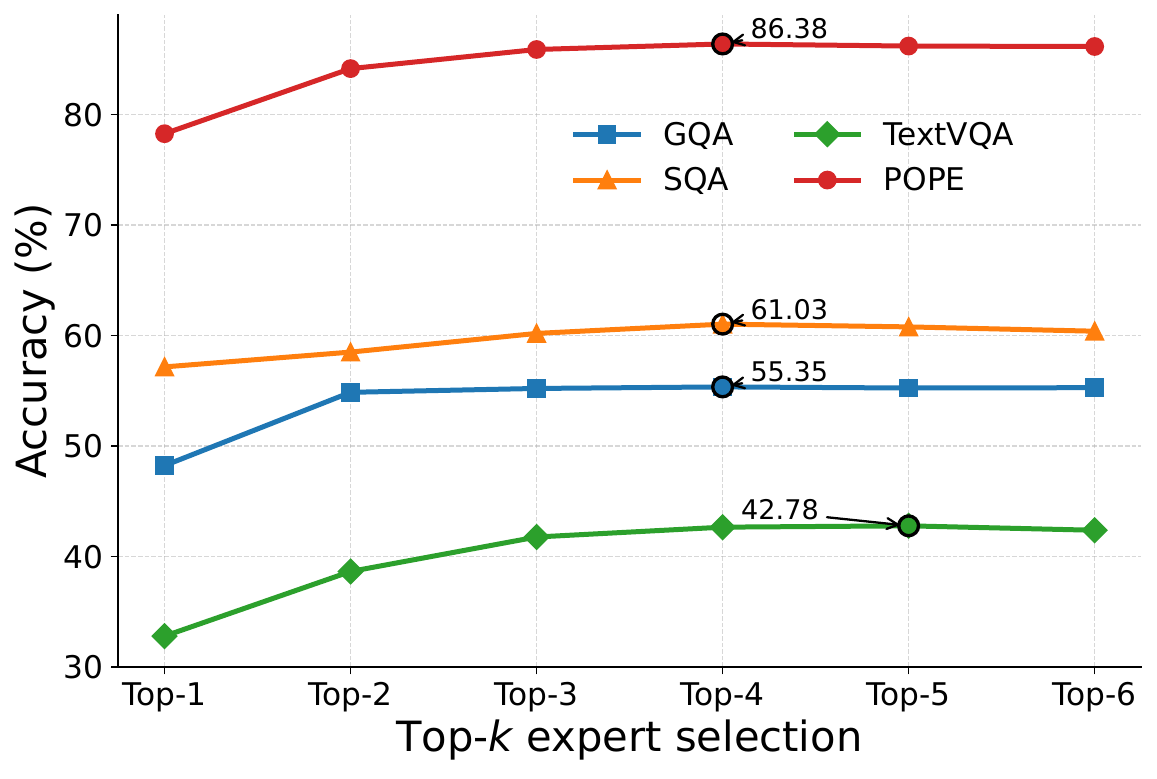}
    \vspace{-0.2cm}
    \caption{Top-$k$ routing budget.}
    \label{fig:topm_selection}
\end{wrapfigure}
\noindent\textbf{Top-$k$ expert selection.}
In addition, we study the effect of the sparse routing budget by varying the number of selected experts in Fig.~\ref{fig:topm_selection}. Smaller values of $k$ encourage stronger sparsity and lower inference cost, but may underuse useful domain experts. Larger values allow more expert composition, but can introduce unnecessary computation and dilute specialization. The results show that a moderate top-$k$ routing budget provides the best trade-off, enabling \method to combine relevant private experts while maintaining efficient inference.

\noindent\textbf{Expert flexibility.}
Tab.~\ref{tab:experts} removes one client expert at a time during inference. 
All settings remain stronger than the public model, showing that \method does not rely on any single private expert. 
The impact is domain-dependent: removing GQA hurts GQA most, while removing some experts can improve certain metrics, suggesting reduced negative transfer.
Overall, \method offers a robust performance trade-off, maintaining balanced results even when individual client experts are removed.
\vspace{-0.1cm}
\section{Conclusion}
\label{sec:conclusion}
\vspace{-0.1cm}

We presented \method, a rehearsal-free MoE framework for distributed visual instruction tuning. 
\method keeps a shared public MLLM as an anchor, learns client-specific private experts locally, and makes them directly comparable through a Stage-II calibration without using any data across clients.
At inference, it composes public and private experts via sparse routing. 
Across diverse vision-language benchmarks, \method improves over distributed merging and routing baselines, while \method-LoRA provides a competitive parameter-efficient alternative. 
These results show that public-anchored calibration can make independently trained multimodal experts reusable under distributed constraints, offering a practical path toward scalable and controllable MLLMs.\\
\noindent\textbf{Limitations and future work.}
Centralized or rehearsal-based MoE training can still perform better on some benchmarks, reflecting the trade-off between unrestricted joint optimization and rehearsal-free distributed adaptation. 
Moreover, Stage-II in \method assumes access to a small public calibration set, which, unlike open-source models, may not be available for open-weight models \cite{casper2025open}, motivating future work on training-free or nearly training-free expert composition.

\bibliographystyle{unsrt}
\bibliography{main}

\clearpage
\newpage
\appendix

\section{Pseudocode Algorithm for \method}
\label{sec:algorithm}

\begin{algorithm}[!h]
\caption{\method: Private-data Rehearsal-free Distributed Visual Instruction Tuning}
\label{alg:distmoe}
\begin{algorithmic}[1]
\Require Public MLLM backbone with public FFN layers $\{\mathcal{E}_{\mathrm{pub}}^\ell\}_{\ell=1}^{L}$ in the LLM; private datasets $\{\mathcal{D}_m\}_{m=1}^{M}$; public data $\mathcal{D}_{\mathrm{pub}}$.
\Ensure Client experts $\{\mathcal{E}_m^\ell\}$, routers $\{\mathbf{r}_{\mathrm{pub},m}^\ell,\mathbf{r}_{m}^\ell,b_{m}^\ell\}$, and adapters $\{T_m^\ell\}$.

\State Freeze the vision encoder, multimodal projector, and public FFNs $\{\mathcal{E}_{\mathrm{pub}}^\ell\}_{\ell=1}^{L}$.

\For{each client $m=1,\ldots,M$}
    \For{each layer $\ell=1,\ldots,L$}
        \State Initialize $\mathcal{E}_m^\ell \leftarrow \mathcal{E}_{\mathrm{pub}}^\ell$.
        \State Initialize $\mathbf{r}_{\mathrm{pub},m}^\ell,\mathbf{r}_{m}^\ell \leftarrow \mathbf{0}$ and $b_{m}^\ell \leftarrow 0$.
    \EndFor

    \Statex \textbf{Stage-I: Expert knowledge acquisition}
    \While{not converged}
        \State Sample a mini-batch $(I,q,y)\sim\mathcal{D}_m$.
        \For{each FFN layer $\ell$ and token hidden state $\mathbf{x}^{\ell}$}
            \State $z_{\mathrm{pub}}=(\mathbf{x}^{\ell})^\top\mathbf{r}_{\mathrm{pub},m}^{\ell}$,\quad
            $z_{m}=(\mathbf{x}^{\ell})^\top\mathbf{r}_{m}^{\ell}+b_{m}^{\ell}$.
            \State $[\alpha_{\mathrm{pub},m}^{\ell},\alpha_{m}^{\ell}]
            =\mathrm{softmax}([z_{\mathrm{pub}},z_{m}])$.
            \State $\mathbf{o}^{\ell}
            =
            \alpha_{\mathrm{pub},m}^{\ell}\mathcal{E}_{\mathrm{pub}}^{\ell}(\mathbf{x}^{\ell})
            +
            \alpha_{m}^{\ell}\mathcal{E}_{m}^{\ell}(\mathbf{x}^{\ell})$.
        \EndFor
        \State Update only $\{\mathcal{E}_{m}^{\ell}\}_{\ell=1}^{L}$ and the client routers by minimizing $\mathcal{L}_{\mathrm{NTP}}$ on $\mathcal{D}_m$.
    \EndWhile

    \Statex \textbf{Stage-II: Public-anchored composition calibration}
    \State Freeze $\{\mathcal{E}_{\mathrm{pub}}^{\ell}\}_{\ell=1}^{L}$ and $\{\mathcal{E}_{m}^{\ell}\}_{\ell=1}^{L}$.
    \State Construct $\mathcal{D}_{m}^{\mathrm{mix}}=\mathcal{D}_{m}\cup\mathcal{D}_{\mathrm{pub}}$.
    \While{not converged}
        \State Sample a mini-batch $(I,q,y)\sim\mathcal{D}_{m}^{\mathrm{mix}}$.
        \For{each FFN layer $\ell$ and token hidden state $\mathbf{x}^{\ell}$}
            \State $\mathbf{h}_{\mathrm{pub}}^{\ell}=\mathcal{E}_{\mathrm{pub}}^{\ell}(\mathbf{x}^{\ell})$,\quad
            $\widetilde{\mathbf{h}}_{m}^{\ell}
            =T_{m}^{\ell}(\mathcal{E}_{m}^{\ell}(\mathbf{x}^{\ell}))$.
            \State $z_{\mathrm{pub}}=(\mathbf{x}^{\ell})^\top\mathbf{r}_{\mathrm{pub},m}^{\ell}$,\quad
            $z_{m}=(\mathbf{x}^{\ell})^\top\mathbf{r}_{m}^{\ell}+b_{m}^{\ell}$.
            \State $[\alpha_{\mathrm{pub},m}^{\ell},\alpha_{m}^{\ell}]
            =\mathrm{softmax}([z_{\mathrm{pub}},z_{m}])$.
            \State $\mathbf{o}^{\ell}
            =
            \alpha_{\mathrm{pub},m}^{\ell}\mathbf{h}_{\mathrm{pub}}^{\ell}
            +
            \alpha_{m}^{\ell}\widetilde{\mathbf{h}}_{m}^{\ell}$.
            \State $\boldsymbol{\Delta}_{m}^{\ell}
            =
            \widetilde{\mathbf{h}}_{m}^{\ell}
            -
            \mathbf{h}_{\mathrm{pub}}^{\ell}$.
        \EndFor
        \State Compute $\mathcal{L}_{\mathrm{iso}}$ from the residuals $\{\boldsymbol{\Delta}_{m}^{\ell}\}_{\ell=1}^{L}$.
        \State Update $\{\mathbf{r}_{\mathrm{pub},m}^{\ell},\mathbf{r}_{m}^{\ell},b_{m}^{\ell},T_m^\ell\}_{\ell=1}^{L}$ by minimizing
        $\mathcal{L}_{\mathrm{Stage\text{-}II}}
        =
        \mathcal{L}_{\mathrm{NTP}}
        +
        \lambda\mathcal{L}_{\mathrm{iso}}$.
    \EndWhile
\EndFor

\Statex \textbf{Inference: Private-data Rehearsal-free modular routing}
\For{each layer $\ell$ and token hidden state $\mathbf{x}^{\ell}$}
    \State $\bar{\mathbf{r}}_{\mathrm{pub}}^{\ell}
    =
    \frac{1}{M}
    \sum_{m=1}^{M}\mathbf{r}_{\mathrm{pub},m}^{\ell}$.
    \State $z_{\mathrm{pub}}^{\ell}
    =
    (\mathbf{x}^{\ell})^\top\bar{\mathbf{r}}_{\mathrm{pub}}^{\ell}$.
    \For{each client expert $m=1,\ldots,M$}
        \State $z_{m}^{\ell}
        =
        (\mathbf{x}^{\ell})^\top\mathbf{r}_{m}^{\ell}
        +
        b_{m}^{\ell}$.
    \EndFor
    \State $\mathbf{z}^{\ell}
    =
    [z_{\mathrm{pub}}^{\ell},z_{1}^{\ell},\ldots,z_{M}^{\ell}]$.
    \State Select active routes $\mathcal{S}^{\ell}$ from $\mathbf{z}^{\ell}$ using the inference routing policy (e.g. top-$k$ routing).
    \State Normalize routing weights over $\mathcal{S}^{\ell}$ to obtain $\alpha_{\mathrm{pub}}^{\ell}$ and $\{\alpha_{m}^{\ell}\}_{m=1}^{M}$.
    \State Assign zero weight to non-selected routes.
    \State $\mathbf{o}^{\ell}
    =
    \alpha_{\mathrm{pub}}^{\ell}\mathcal{E}_{\mathrm{pub}}^{\ell}(\mathbf{x}^{\ell})
    +
    \sum_{m=1}^{M}
    \alpha_{m}^{\ell}
    T_{m}^{\ell}
    \left(\mathcal{E}_{m}^{\ell}(\mathbf{x}^{\ell})\right)$.
\EndFor
\end{algorithmic}
\end{algorithm}

\newpage
\section{Derivation of the Residual Influence Bound}
\label{app:residual_bound}

For clarity, we omit the layer index and consider one token representation. 
The same derivation applies independently at every layer. 
At inference, routing is performed over $M+1$ routes: one shared public route and $M$ private routes. 
Top-$K$ selection assigns zero weight to non-selected routes, so the normalized weights satisfy
$\alpha_{\mathrm{pub}}+\sum_{m=1}^{M}\alpha_m=1$.

\subsection{Mathematical derivation of Eq.~\ref{eq:residual_decomposition}}
\label{sec:math_derivation}

The inference-time routed output is
\begin{equation}
    \mathbf{o}
    =
    \alpha_{\mathrm{pub}}
    \mathbf{h}_{\mathrm{pub}}
    +
    \sum_{m=1}^{M}
    \alpha_m
    \widetilde{\mathbf{h}}_{m}.
\end{equation}
The routing weights are normalized over the selected routes, with non-selected routes assigned zero weight:
\begin{equation}
    \alpha_{\mathrm{pub}}
    +
    \sum_{m=1}^{M}
    \alpha_m
    =
    1.
\end{equation}
For each private expert $m$, the client residual is defined as
\begin{equation}
    \boldsymbol{\Delta}_{m}
    =
    \widetilde{\mathbf{h}}_{m}
    -
    \mathbf{h}_{\mathrm{pub}}.
\end{equation}
Equivalently,
\begin{equation}
    \widetilde{\mathbf{h}}_{m}
    =
    \mathbf{h}_{\mathrm{pub}}
    +
    \boldsymbol{\Delta}_{m}.
\end{equation}
Substituting this expression into the routed output gives
\begin{align}
    \mathbf{o}
    &=
    \alpha_{\mathrm{pub}}
    \mathbf{h}_{\mathrm{pub}}
    +
    \sum_{m=1}^{M}
    \alpha_m
    \left(
        \mathbf{h}_{\mathrm{pub}}
        +
        \boldsymbol{\Delta}_{m}
    \right) \\
    &=
    \left(
        \alpha_{\mathrm{pub}}
        +
        \sum_{m=1}^{M}
        \alpha_m
    \right)
    \mathbf{h}_{\mathrm{pub}}
    +
    \sum_{m=1}^{M}
    \alpha_m
    \boldsymbol{\Delta}_{m}.
\end{align}
Using the normalization of the routing weights, we obtain
\begin{equation}
    \mathbf{o}
    =
    \mathbf{h}_{\mathrm{pub}}
    +
    \sum_{m=1}^{M}
    \alpha_m
    \boldsymbol{\Delta}_{m},
\end{equation}
which is Eq.~\ref{eq:residual_decomposition}.

\subsection{Proof of Proposition~\ref{prop:residual_bound}}
\label{sec:proof_prop}

\begin{proposition*}[\ref{prop:residual_bound}]
For the routed output in Eq.~\ref{eq:residual_decomposition}, the deviation from the shared public expert is bounded by the routing-weighted residual magnitudes:
\begin{equation}
    \left\|
        \mathbf{o}
        -
        \mathbf{h}_{\mathrm{pub}}
    \right\|_{2}
    \leq
    \sum_{m=1}^{M}
    \alpha_m
    \left\|
        \boldsymbol{\Delta}_{m}
    \right\|_{2}.
\end{equation}
\end{proposition*}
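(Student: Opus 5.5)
The plan is to reduce the bound to a direct consequence of the residual decomposition in Eq.~\ref{eq:residual_decomposition}, already derived in Section~\ref{sec:math_derivation}. That derivation used only two facts: the normalization $\alpha_{\mathrm{pub}}+\sum_{m=1}^{M}\alpha_m=1$ (with non-selected routes set to zero under top-$K$) and the definition $\widetilde{\mathbf{h}}_m=\mathbf{h}_{\mathrm{pub}}+\boldsymbol{\Delta}_m$. Subtracting $\mathbf{h}_{\mathrm{pub}}$ from both sides of Eq.~\ref{eq:residual_decomposition} gives
\begin{equation*}
\mathbf{o}-\mathbf{h}_{\mathrm{pub}}=\sum_{m=1}^{M}\alpha_m\boldsymbol{\Delta}_m,
\end{equation*}
so the whole task is to bound the Euclidean norm of this weighted sum of residuals.

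Next I take the $\ell_2$ norm of both sides. Applying the triangle inequality repeatedly (by induction on the number of summands) yields
\begin{equation*}
\Bigl\|\sum_{m=1}^{M}\alpha_m\boldsymbol{\Delta}_m\Bigr\|_2\le\sum_{m=1}^{M}\|\alpha_m\boldsymbol{\Delta}_m\|_2 .
\end{equation*}
Absolute homogeneity of the norm then gives $\|\alpha_m\boldsymbol{\Delta}_m\|_2=|\alpha_m|\,\|\boldsymbol{\Delta}_m\|_2$.

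The only point needing care is removing the absolute value, i.e. showing $|\alpha_m|=\alpha_m$. Each $\alpha_m$ is either zero for a non-selected route, or a softmax probability renormalized over the selected set $\mathcal{S}$. In the second case it is a ratio of positive exponentials, so $\alpha_m\ge 0$ in every case. I will state this explicitly, since without non-negativity the bound would hold only with $|\alpha_m|$ in place of $\alpha_m$.

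Chaining these three steps gives Eq.~\ref{eq:residual_bound}. The argument works token by token and layer by layer, so it holds for every token representation and every layer independently. No other results from the paper are needed.
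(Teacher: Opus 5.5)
Your proposal is correct and follows the paper's proof step for step: rewrite $\mathbf{o}-\mathbf{h}_{\mathrm{pub}}$ via Eq.~\ref{eq:residual_decomposition}, apply the triangle inequality, and use non-negativity of the routing weights to pull $\alpha_m$ out of the norm. Your explicit justification that $\alpha_m\ge 0$ (either zero or a renormalized softmax ratio of positive exponentials) spells out what the paper asserts in a single line.
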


\begin{proof}
From Eq.~\ref{eq:residual_decomposition}, we have
\begin{equation}
    \mathbf{o}
    -
    \mathbf{h}_{\mathrm{pub}}
    =
    \sum_{m=1}^{M}
    \alpha_m
    \boldsymbol{\Delta}_{m}.
\end{equation}
Taking the Euclidean norm on both sides gives
\begin{equation}
    \left\|
        \mathbf{o}
        -
        \mathbf{h}_{\mathrm{pub}}
    \right\|_{2}
    =
    \left\|
        \sum_{m=1}^{M}
        \alpha_m
        \boldsymbol{\Delta}_{m}
    \right\|_{2}.
\end{equation}
Applying the triangle inequality,
\begin{equation}
    \left\|
        \sum_{m=1}^{M}
        \alpha_m
        \boldsymbol{\Delta}_{m}
    \right\|_{2}
    \leq
    \sum_{m=1}^{M}
    \left\|
        \alpha_m
        \boldsymbol{\Delta}_{m}
    \right\|_{2}.
\end{equation}
Since routing weights are non-negative, $\alpha_m\geq 0$, we have
\begin{equation}
    \left\|
        \alpha_m
        \boldsymbol{\Delta}_{m}
    \right\|_{2}
    =
    \alpha_m
    \left\|
        \boldsymbol{\Delta}_{m}
    \right\|_{2}.
\end{equation}
Therefore,
\begin{equation}
    \left\|
        \mathbf{o}
        -
        \mathbf{h}_{\mathrm{pub}}
    \right\|_{2}
    \leq
    \sum_{m=1}^{M}
    \alpha_m
    \left\|
        \boldsymbol{\Delta}_{m}
    \right\|_{2},
\end{equation}
which proves the proposition.
\end{proof}

\section{Additional Experimental Details}
\label{sec:additional_experimental_details}

\subsection{Dataset Details}
\label{sec:dataset_details}
We construct a distributed visual-instruction tuning setup with five private client domains: COCO~\cite{lin2014microsoft}, GQA~\cite{hudson2019gqa}, OCRVQA~\cite{8978122}, VQA$^{\mathrm{T}}$~\cite{singh2019towards}, and Visual Genome (VG)~\cite{krishna2017visual}. In Stage-I, each client is trained only on its own private dataset, without accessing data from other clients. In Stage-II, we keep the client-specific dataset for each client and add a small public subset for router and adapter calibration. The public subset contains $19$k samples drawn from the Hybrid-FT data group used in MoE-LLaVA, including SViT-157k~\cite{zhao2023svit}, LVIS-220k~\cite{wang2023see}, LRV-331k~\cite{liu2023aligning} and MIMIC-IT-256k~\cite{li2023mimic}. In addition, the public MLLM backbone i.e. Stage-2 MoE-LLaVA checkpoint is already pretrained on approximately $1.5$M multimodal instruction data including LLaVA 1.5-558k~\cite{liu2024improved} and the Hybrid-FT data group, while \method only performs distributed client-specific adaptation on top of this backbone. Tab.~\ref{tab:training_data} summarizes the training data used for public model initialization, private and public data distribution. Below, we also discuss the test dataset along with the evaluation metrics used in our experiments.

\subsection{Evaluation Metrics}
\label{sec:evaluation_metrics}

\noindent\textbf{Academic-task benchmark.} In Tab.~\ref{tab:qwen_img}, ~\ref{tab:phi_img}, ~\ref{tab:stablelm_img}, we use the following multiple evaluation metrics of each dataset. 
(i) \textbf{VQA$^{\mathrm{v2}}$}~\cite{goyal2017making} -
Accuracy for \textit{Yes/No}, \textit{Number}, and \textit{Other} questions, together with \textit{Overall} accuracy, which evaluate binary recognition, numerical reasoning, and open-ended visual question answering respectively are reported.
(ii) \textbf{GQA}~\cite{hudson2019gqa} - We report \textit{Binary}, \textit{Open}, and overall accuracy (\textit{Acc.}), along with the distribution score (\textit{Dist.}). Binary and open accuracy measure closed-form and open-ended compositional reasoning, while lower \textit{Dist.} indicates less answer-distribution bias.
(iii) \textbf{SQA}: ScienceQA~\cite{lu2022learn} -
We use standard answer accuracy (\textit{Acc.}) and image-grounded accuracy (\textit{IMG-Acc.}). The latter focuses on questions requiring visual evidence rather than language priors. 
(iv) \textbf{VQA$^{\mathrm{T}}$}: TextVQA~\cite{singh2019towards} - We report answer accuracy, which evaluates the model's ability to read and reason over text appearing in images.
However, in main Tab.~\ref{tab:main_all_backbones}, we showcase only overall accuracy for VQA$^{\mathrm{v2}}$ GQA, and VQA$^{\mathrm{T}}$, whereas image-grounded accuracy for SQA dataset.

\begin{table}[!t]
\centering
\caption{\textbf{Training data used in \method.} Stage-I trains each client expert using only its own private dataset. Stage-II uses the same client data together with a small public subset sampled from the Hybrid-FT data group.}
\label{tab:training_data}
\renewcommand{\arraystretch}{1.15}
\setlength{\tabcolsep}{5pt}
\resizebox{\linewidth}{!}{
\begin{tabular}{l|c|c|c}
\toprule
\textbf{Data group} & \textbf{Usage} & \textbf{Source} & \textbf{\# Samples} \\
\midrule
Public backbone pretraining 
& Initialization 
& \begin{tabular}[c]{@{}c@{}}
LLaVA-PT group: LLaVA 1.5-558k \\
Hybrid-FT group: SViT-157k, LVIS-220k, \\
LRV-331k, MIMIC-IT-256k
\end{tabular}
& $\sim$1.5M \\
\midrule
\multirow{5}{*}{Private client data}
& Stage-I / II for $\mathcal{E}_{1}$
& COCO 
& 364k \\
& Stage-I / II for $\mathcal{E}_{2}$
& GQA 
& 72k \\
& Stage-I / II for $\mathcal{E}_{3}$
& OCRVQA 
& 80k \\
& Stage-I / II for $\mathcal{E}_{4}$
& VQA$^{\rm T}$ 
& 22k \\
& Stage-I / II for $\mathcal{E}_{5}$
& Visual Genome (VG) 
& 86k \\
\midrule
Public data
& Stage-II only
& \begin{tabular}[c]{@{}l@{}}
Subset of Hybrid-FT group: SViT-157k,\\
LVIS-220k, LRV-331k, MIMIC-IT-256k
\end{tabular}
& 19k \\
\bottomrule
\end{tabular}
}
\end{table}

\noindent\textbf{Instruction-following benchmark.} In Tab.~\ref{tab:qwen_bench}, ~\ref{tab:phi_bench}, ~\ref{tab:stablelm_bench}, we use the following multiple evaluation metrics of each dataset.
(i) \textbf{POPE}~\cite{li2023evaluating} - We report accuracy under \textit{Random}, \textit{Popular}, and \textit{Adversarial} settings, together with their average, that evaluate object hallucination robustness under increasingly challenging negative samples.
(ii) \textbf{MME}~\cite{fu2023mme} - We report \textit{Perception} and \textit{Cognition} scores. Perception measures visual understanding such as recognition, counting, position, and OCR, while cognition measures higher-level multimodal reasoning.
(iii) \textbf{SEED-Bench}~\cite{li2023seed} -
We report \textit{All}, \textit{Image}, and \textit{Video} scores, measuring overall performance, static-image understanding, and video-related reasoning.
(iv) \textbf{LLaVA$^{\mathrm{W}}$}: LLaVA-Bench (in-the-Wild)~\cite{llava} - We report \textit{All}, \textit{Complex Reasoning} (\textit{Comp.}), \textit{Conversation} (\textit{Conv.}), and \textit{Detailed Description} (\textit{Det.}) scores for open-ended multimodal instruction following.
(v) \textbf{MM-Vet}~\cite{yu2023mm} - We report overall accuracy, which evaluates integrated multimodal abilities such as recognition, OCR, spatial reasoning, knowledge reasoning, and language generation.
However, in main Tab.~\ref{tab:main_all_backbones}, we report only average accuracy for POPE, perception score for MME, \textit{All} for SEED-Bench and LLaVA$^{\mathrm{W}}$, and overall accuracy for MM-Vet dataset.

\begin{table*}[t]
\centering
\caption{\textbf{Zero-shot object hallucination evaluation in POPE} dataset for \method with Qwen-1.8B, Phi-2.7B, and StableLM-1.6B. We report accuracy, F1-score, and the yes-answer ratio under Random, Popular, and Adversarial settings. ``Yes'' indicates the proportion of positive responses to the given question.}
\label{tab:pope_all_backbones}
\setlength{\tabcolsep}{4pt}
\footnotesize
\resizebox{\textwidth}{!}{%
\begin{tabular}{ll ccc>{\hspace{8pt}} ccc>{\hspace{8pt}} ccc}
\toprule
\textbf{LLM} & \textbf{Method}
& \multicolumn{3}{c}{\textbf{Random}}
& \multicolumn{3}{c}{\textbf{Popular}}
& \multicolumn{3}{c}{\textbf{Adversarial}} \\
\cmidrule(lr){3-5} \cmidrule(lr){6-8} \cmidrule(lr){9-11}
& & Acc. $\uparrow$ & F1 $\uparrow$ & Yes $\downarrow$
& Acc. $\uparrow$ & F1 $\uparrow$ & Yes $\downarrow$
& Acc. $\uparrow$ & F1 $\uparrow$ & Yes $\downarrow$ \\
\midrule

\cellcolor{white}\multirow{5}{*}{Qwen-1.8B}
& Public model
& 68.38 & 75.57 & 77.87
& 75.90 & 79.74 & 68.97
& 69.23 & 75.51 & 75.63 \\

& FlexOlmo-LoRA
& 86.80 & 86.48 & 46.11
& 86.63 & 85.54 & 45.30
& 82.63 & 82.51 & 49.30 \\

& FlexOlmo
& 86.84 & 85.75 & \textbf{40.79}
& 86.26 & 84.83 & \textbf{40.53}
& \textbf{84.83} & 82.99 & \textbf{42.53} \\

\rowcolor{yesgreen!8}
\cellcolor{white}
& \method-LoRA
& 87.08 & 86.43 & 43.64
& 86.67 & 85.68 & 43.13
& 83.30 & 82.69 & 46.50 \\

\rowcolor{yesgreen!8}
\cellcolor{white}
& \method
& \textbf{88.18} & \textbf{87.51} & 43.09
& \textbf{86.73} & \textbf{85.83} & 43.60
& 84.23 & \textbf{83.59} & 46.10 \\

\midrule

\cellcolor{white}\multirow{5}{*}{Phi-2.7B}
& Public model
& 83.54	&84.89	&57.42	&79.37	&81.30	&60.37	&78.53	&80.69	&61.20 \\

& FlexOlmo-LoRA
& 83.54 & 84.85 & 57.42
& 79.43 & 81.35 & 60.30
& 78.63 & 80.77 & 61.10 \\

& FlexOlmo
& 74.91 & 67.98 & \textbf{26.80}
& 75.30 & 67.66 & \textbf{26.37}
& 74.73 & 67.15 & \textbf{26.93} \\

\rowcolor{yesgreen!8}
\cellcolor{white}
& \method-LoRA
& \textbf{83.95} & \textbf{84.89} & 57.43
& 79.82 & \textbf{81.42} & 60.37
& 78.76 & \textbf{80.92} & 61.34 \\

\rowcolor{yesgreen!8}
\cellcolor{white}
& \method
& 80.86 & 77.39 & 33.09
& \textbf{80.80} & 76.79 & 32.73
& \textbf{79.83} & 75.91 & 33.70 \\

\midrule

\cellcolor{white}\multirow{5}{*}{StableLM-1.6B}
& Public model
& 58.76 & 71.32 & 92.23
& 53.80 & 68.28 & 95.67
& 53.37 & 68.08 & 96.10 \\

& FlexOlmo-LoRA
& 75.62 & 70.37 & \textbf{28.56}
& 77.45 & 70.15 & \textbf{28.78}
& 76.04 & 70.83 & \textbf{30.05} \\

& FlexOlmo
& 77.84 & 72.82 & 30.00
& 78.07 & 72.42 & 29.53
& 76.87 & 71.34 & 30.73 \\

\rowcolor{yesgreen!8}
\cellcolor{white}
& \method-LoRA
& 84.06 & 82.45 & 37.96
& 83.92 & 81.23 & 37.65
& 81.45 & 78.19 & 38.20 \\

\rowcolor{yesgreen!8}
\cellcolor{white}
& \method
& \textbf{84.98} & \textbf{83.29} & 38.32
& \textbf{84.27} & \textbf{82.19} & 38.33
& \textbf{82.27} & \textbf{80.37} & 40.33 \\

\bottomrule
\end{tabular}%
}
\vspace{-0.3cm}
\end{table*}

\subsection{Additional Implementation Details}
\label{sec:implementation_details}
We implement \method on three MoE-LLaVA-Stage2 backbones, using Qwen-1.8B, StableLM-1.6B, and Phi-2.7B as the LLMs, with CLIP ViT-L/14 as the shared vision encoder, in which the input image resolution is set to $336 \times 336$. For each backbone, we replace every LLM FFN with a routed public-private FFN, while keeping the vision encoder and multimodal projector frozen. In the full fine-tuning setting, each private expert is initialized as a full copy of the corresponding public FFN. In the LoRA setting, each client uses $R=16$ LoRA experts on the linear projection matrices, with rank $r=16$ and scaling factor $\alpha=32$. Stage-II freezes all private experts and updates only the routers, router biases, and lightweight adapter $T_m(\cdot)$, which is taken as a linear layer of same input-output dimension of the FFN blocks of the corresponding LLMs. We use top-$k$ routing with $m=4$ for full experts and $k=4R=64$ for LoRA experts. For expert composition, we use $|\mathcal{D}_{\mathrm{pub}}|=19\mathrm{k}$ public samples, and $\lambda=10^{-4}$ in Eq.~\ref{eq:stage2_objective}. All experiments are conducted on $4$ NVIDIA A100 GPUs with $64$GB memory each. We train for $1$ epoch using AdamW with learning rate $2\times10^{-5}$, cosine learning-rate scheduling, warmup ratio $0.03$, and per-device batch size $4$ in both of the training stages.

\begin{table*}[!t]
  \centering
\caption{\textbf{Detailed academic-task benchmark results with Qwen-1.8B.} 
We report fine-grained results on VQA$^{\mathrm{v2}}$, GQA, SQA, and VQA$^{\mathrm{T}}$. 
The Public model denotes the shared backbone before client-specific expert composition. 
MoE-LLaVA$^\dagger$ denotes the original results reported in~\cite{moellava}, while MoE-LLaVA denotes the offline MoE model trained with joint access to the same client datasets used in our distributed setting. }
  \renewcommand{\arraystretch}{1.1}
  \setlength{\tabcolsep}{3.6pt}
  \scriptsize
  \resizebox{\linewidth}{!}{%
  \begin{tabular}{l cccc>{\hspace{8pt}} cccc>{\hspace{8pt}} cc>{\hspace{8pt}} c}
    \toprule
    \multirow{2}{*}{\textbf{Method}}
      & \multicolumn{4}{c}{\textbf{VQA$^{\mathrm{v2}}$}}
      & \multicolumn{4}{c}{\textbf{GQA}}
      & \multicolumn{2}{c}{\textbf{SQA}}
      & \multicolumn{1}{c}{\textbf{VQA$^{\mathrm{T}}$}} \\
    \cmidrule(lr){2-5}
    \cmidrule(lr){6-9}
    \cmidrule(lr){10-11}
    \cmidrule(lr){12-12}
      & Yes/No $\uparrow$
      & Num. $\uparrow$
      & Other $\uparrow$
      & Overall $\uparrow$
      & Binary $\uparrow$
      & Open $\uparrow$
      & Acc. $\uparrow$
      & Dist. $\downarrow$
      & Acc. $\uparrow$
      & IMG-Acc. $\uparrow$
      & Acc. $\uparrow$ \\
    \midrule

    \multicolumn{12}{l}{\textbf{\textit{Offline reference models (upperbound)}}} \\
    Qwen-VL
      & \ub{--} & \ub{--} & \ub{--} & \ub{78.80}
      & \ub{--} & \ub{--} & \ub{59.30} & \ub{--}
      & \ub{--} & \ub{67.10}
      & \ub{63.80} \\

    MoE-LLaVA$^\dagger$
      & \ub{--} & \ub{--} & \ub{--} & \ub{76.20}
      & \ub{--} & \ub{--} & \ub{61.50} & \ub{--}
      & \ub{--} & \ub{63.10}
      & \ub{48.00} \\
      
    MoE-LLaVA
      & \softmask{88.26} & \softmask{53.46} & \softmask{63.10} & \softmask{72.81}
      & \softmask{73.56} & \softmask{44.41} & \softmask{57.29} & \softmask{2.07}
      & \softmask{64.37} & \softmask{63.22}
      & \softmask{44.15} \\

    \cmidrule(lr){1-12}
    \multicolumn{12}{l}{\textbf{\textit{Individual experts}}} \\
    COCO
      & 84.41 & 51.84 & 58.28 & 69.94
      & 62.01 & 34.03 & 46.88 & 2.34
      & 60.56 & 59.24
      & 35.51 \\

    GQA
      & 82.65 & 46.47 & 56.98 & 66.36
      & 68.15 & 37.38 & 51.62 & 2.62
      & 38.91 & 29.75
      & 32.27 \\

    VQA$^{\text{OCR}}$
      & 70.10 & 29.08 & 45.21 & 53.63
      & 39.29 & 17.13 & 27.30 & 2.41
      & 60.25 & 59.11
      & 36.16 \\

    VG
      & 0.01 & 0.16 & 6.00 & 2.87
      & 0.12 & 0.84 & 0.51 & 4.23
      & 41.22 & 37.83
      & 16.62 \\

    VQA$^{\mathrm{T}}$
      & 4.27 & 1.88 & 39.09 & 20.52
      & 9.28 & 16.31 & 13.09 & 2.54
      & 56.87 & 58.75
      & 0.88 \\

    \cmidrule(lr){1-12}

    Public model
      & 5.65 & 1.76 & 38.85 & 20.96
      & 10.03 & 16.91 & 13.75 & 2.45
      & 55.34 & 58.50
      & 17.72 \\
      
    Model Soup (avg)
      & 84.92 & 50.61 & 59.18 & 68.92
      & 69.54 & 39.52 & 53.18 & 2.35
      & 60.52 & 59.98
      & 40.83 \\

    Model Soup (wtd)
      & 85.21 & 50.94 & 59.43 & 69.18
      & 69.81 & 39.86 & 53.46 & 2.33
      & 60.83 & 60.21
      & 41.18 \\

    BTM
      & 85.47 & 51.22 & 59.72 & 69.73
      & 70.08 & 40.18 & 53.81 & 2.28
      & 61.36 & 60.66
      & 41.87 \\

    BTX
      & 85.74 & 51.48 & 60.03 & 69.46
      & 70.41 & 40.54 & 54.16 & 2.28
      & 61.12 & 60.43
      & 41.52 \\

    FlexOlmo-LoRA
      & 74.39 & 45.25 & 52.94 & 63.45
      & 61.65 & 40.63 & 50.28 & 1.85
      & 59.33 & 59.15
      & 34.46 \\

    FlexOlmo
      & 85.15 & 51.67 & 60.04 & 70.01
      & 70.32 & 40.50 & 54.26 & 2.26
      & 61.68 & 60.85
      & 41.70 \\

    \rowcolor{yesgreen!8}
    \method-LoRA
      & 77.28 & 47.16 & 55.47 & 65.80
      & 62.03 & 40.73 & 50.51 & \textbf{1.79}
      & 59.21 & 59.30
      & 33.69 \\

    \rowcolor{yesgreen!8}
    \method
      & \textbf{86.38} & \textbf{52.04} & \textbf{60.64} & \textbf{70.27}
      & \textbf{71.25} & \textbf{41.87} & \textbf{55.35} & 2.11
      & \textbf{61.94} & \textbf{61.03}
      & \textbf{42.66} \\

    \bottomrule
  \end{tabular}%
  }
  \label{tab:qwen_img}
\end{table*}
\begin{table*}[!th]
  \centering
  \caption{\textbf{Detailed instruction-following benchmark results with Qwen-1.8B.} 
We report fine-grained results on POPE, MME, SEED-Bench, LLaVA$^{\mathrm{W}}$, and MM-Vet. 
The Public model denotes the shared backbone before client-specific expert composition. 
MoE-LLaVA$^\dagger$ denotes the original results reported in~\cite{moellava}, while MoE-LLaVA denotes the offline MoE model trained with joint access to the same client datasets used in our distributed setting. }
  \renewcommand{\arraystretch}{1.1}
  \setlength{\tabcolsep}{3.6pt}
  \scriptsize
  \resizebox{\linewidth}{!}{%
  \begin{tabular}{l cccc>{\hspace{8pt}} cc>{\hspace{8pt}} ccc>{\hspace{8pt}} cccc>{\hspace{8pt}} c}
    \toprule
    \multirow{2}{*}{\textbf{Method}} 
    & \multicolumn{4}{c}{\textbf{POPE}}
      & \multicolumn{2}{c}{\textbf{MME}}
      & \multicolumn{3}{c}{\textbf{SEED-Bench}}
      & \multicolumn{4}{c}{\textbf{LLaVA$^{\mathrm{W}}$}}
      & \multicolumn{1}{c}{\textbf{MM-Vet}} \\
    \cmidrule(lr){2-5}
    \cmidrule(lr){6-7}
    \cmidrule(lr){8-10}
    \cmidrule(lr){11-14}
    \cmidrule(lr){15-15}
      & Rand. $\uparrow$
      & Pop. $\uparrow$
      & Adv. $\uparrow$
      & Avg. $\uparrow$
      & Perc. $\uparrow$
      & Cog. $\uparrow$
      & All. $\uparrow$
      & Img. $\uparrow$
      & Vid. $\uparrow$
      & All $\uparrow$
      & Comp. $\uparrow$
      & Conv. $\uparrow$
      & Det. $\uparrow$ 
      & Acc. $\uparrow$ \\
    \midrule

    \multicolumn{12}{l}{\textbf{\textit{Offline reference models (upperbound)}}} \\
    MoE-LLaVA
      & \ub{90.48} & \ub{88.82} & \ub{88.67} & \ub{89.32}
      & \ub{1230.16} & \ub{272.47}
      & \ub{57.08} & \ub{61.47} & \ub{40.52}
      & \ub{62.20} & \ub{57.20} & \ub{74.60} & \ub{52.50}
      & \ub{15.60} \\
      
    \midrule
    
    \multicolumn{15}{l}{\textbf{\textit{Individual experts}}} \\
    COCO
      & 86.49 & 85.93 & 82.97 & 85.13
      & 1057.79 & 241.52
      & 53.40 & 58.35 & 36.81
      & 71.70 & 73.10 & 72.50 & 68.40
      & 20.10 \\

    GQA
      & 82.92 & 83.13 & 82.07 & 82.71
      & 1109.63 & 252.50
      & 14.76 & 11.76 & 26.14
      & 58.60 & 51.90 & 64.80 & 63.30
      & 18.10 \\

    VQA$^{\text{OCR}}$
      & 84.47 & 79.07 & 77.67 & 80.40
      & 1122.81 & 212.14
      & 51.83 & 55.13 & 35.42
      & 69.10 & 68.50 & 71.50 & 67.00
      & 17.10 \\

    VG
      & 51.58 & 50.00 & 50.03 & 50.54
      & 1.34 & 2.86
      & 36.09 & 36.61 & 34.12
      & 44.70 & 44.80 & 50.80 & 35.60
      & 18.30 \\

    VQA$^{\text{T}}$
      & 69.28 & 73.83 & 67.80 & 70.30
      & 455.94 & 190.36
      & 52.23 & 56.25 & 37.02
      & 64.30 & 65.90 & 67.20 & 57.70
      & 18.20 \\

    \cmidrule(lr){1-15}
    Public model
      & 68.38 & 75.90 & 69.23 & 71.17
      & 402.54 & 176.79
      & 51.17 & 55.14 & 36.15
      & 71.00 & 70.30 & 79.30 & 62.50
      & 18.60 \\

    Model Soup (avg)
      & 85.31 & 85.14 & 83.21 & 84.55
      & 1154.80 & 235.42
      & 53.82 & 57.91 & 37.84
      & 50.40 & 45.80 & 62.40 & 41.90
      & 14.60 \\

    Model Soup (wtd)
      & 85.67 & 85.39 & 83.46 & 84.84
      & 1160.35 & 239.18
      & 54.06 & 58.24 & 38.02
      & 51.10 & 46.30 & 63.70 & 42.40
      & 14.70 \\

    BTM
      & 86.39 & 86.03 & 83.96 & 85.46
      & 1167.92 & 243.61
      & 54.31 & 58.55 & 38.21
      & 52.00 & 46.70 & 64.90 & 42.90
      & 14.80 \\

    BTX
      & 86.02 & 85.72 & 83.71 & 85.15
      & 1174.63 & 247.86
      & 54.63 & 58.91 & 38.43
      & 52.90 & 47.10 & 66.10 & 43.40
      & 14.90 \\

    FlexOlmo-LoRA
      & 86.80 & 86.63 & 82.63 & 85.35
      & 938.50 & 251.07
      & 53.27 & 57.35 & 37.82
      & 72.00 & 73.00 & 75.20 & 66.30
      & 21.40 \\

    FlexOlmo
      & 86.84 & 86.26 & \textbf{84.83} & 85.98
      & 1176.35 & 249.29
      & 54.91 & 59.19 & 38.67
      & 52.20 & 47.20 & 65.60 & 43.80
      & 15.00 \\

    \rowcolor{yesgreen!8}
    \method-LoRA
      & 87.08 & 86.67 & 83.30 & 85.68
      & 995.61 & 231.43
      & 53.34 & 57.30 & 38.33
      & \textbf{73.50} & \textbf{73.80} & \textbf{79.10} & \textbf{65.30}
      & \textbf{24.40} \\

    \rowcolor{yesgreen!8}
    \method
      & \textbf{88.18} & \textbf{86.73} & 84.23 & \textbf{86.38}
      & \textbf{1188.56} & \textbf{259.29}
      & \textbf{55.11} & \textbf{59.38} & \textbf{38.94}
      & 59.90 & 53.30 & 77.50 & 51.00
      & 15.10 \\

    \bottomrule
  \end{tabular}%
  }
  \label{tab:qwen_bench}
\end{table*}
\section{Additional Experiments}
\label{sec:additional_results}

This section provides the object hallucination evaluation results in Tab.~\ref{tab:pope_all_backbones} and the full per-benchmark breakdowns underlying the aggregate results in Tab.~\ref{tab:main_all_backbones}. 
Tabs.~\ref{tab:qwen_img}--\ref{tab:phi_bench} report detailed results for Qwen-1.8B and Phi-2.7B, while Tabs.~\ref{tab:stablelm_img}--\ref{tab:stablelm_bench} provide the complete StableLM-1.6B results.

\begin{table*}[t]
  \centering
  \caption{\textbf{Detailed academic-task benchmark results with Phi-2.7B.} 
We report fine-grained results on VQA$^{\mathrm{v2}}$, GQA, SQA, and VQA$^{\mathrm{T}}$. 
The Public model denotes the shared backbone before client-specific expert composition. 
MoE-LLaVA$^\dagger$ denotes the original results reported in~\cite{moellava}, while MoE-LLaVA denotes the offline MoE model trained with joint access to the same client datasets used in our distributed setting. }
  \renewcommand{\arraystretch}{1.1}
  \setlength{\tabcolsep}{3.6pt}
  \scriptsize
  \resizebox{\linewidth}{!}{%
  \begin{tabular}{l cccc>{\hspace{8pt}} cccc>{\hspace{8pt}} cc>{\hspace{8pt}} c}
    \toprule
    \multirow{2}{*}{\textbf{Method}}
      & \multicolumn{4}{c}{\textbf{VQA$^{\mathrm{v2}}$}}
      & \multicolumn{4}{c}{\textbf{GQA}}
      & \multicolumn{2}{c}{\textbf{SQA}}
      & \multicolumn{1}{c}{\textbf{VQA$^{\mathrm {T}}$}} \\
    \cmidrule(lr){2-5}
    \cmidrule(lr){6-9}
    \cmidrule(lr){10-11}
    \cmidrule(lr){12-12}
      & Yes/No $\uparrow$
      & Num. $\uparrow$
      & Other $\uparrow$
      & Overall $\uparrow$
      & Binary $\uparrow$
      & Open $\uparrow$
      & Acc. $\uparrow$
      & Dist. $\downarrow$
      & Acc. $\uparrow$
      & IMG-Acc. $\uparrow$
      & Acc. $\uparrow$ \\
    \midrule

    \multicolumn{12}{l}{\textbf{\textit{Offline reference models (upper bound)}}} \\
    LLaVA-Phi
      & \ub{--} & \ub{--} & \ub{--} & \ub{71.40}
      & \ub{--} & \ub{--} & \ub{--} & \ub{--}
      & \ub{--} & \ub{68.40}
      & \ub{48.60} \\

    MoE-LLaVA$^\dagger$
      & \ub{--} & \ub{--} & \ub{--} & \ub{77.60}
      & \ub{--} & \ub{--} & \ub{61.40} & \ub{--}
      & \ub{--} & \ub{68.50}
      & \ub{51.40} \\

    MoE-LLaVA 
      & \ub{90.30} & \ub{56.96} & \ub{66.00} & \ub{74.99}
      & \ub{71.21} & \ub{44.38} & \ub{56.69} & \ub{1.60}
      & \ub{73.03} & \ub{68.37}
      & \ub{46.52} \\

    \cmidrule(lr){1-12}
    \multicolumn{12}{l}{\textbf{\textit{Individual experts}}} \\
    COCO
      & 84.73 & 47.33 & 58.96 & 65.33
      & 62.83 & 36.86 & 48.78 & 2.33
      & 63.43 & 59.26
      & 36.49 \\

    GQA
      & 85.02 & 48.50 & 59.68 & 68.26
      & 67.81 & 40.99 & 51.68 & 2.28
      & 39.99 & 27.66
      & 32.30 \\

    VQA$^{\text{OCR}}$
      & 71.46 & 30.57 & 46.14 & 51.46
      & 52.40 & 27.70 & 39.04 & 2.15
      & 63.17 & 59.21
      & 30.67 \\

    VG
      & 0.05 & 0.20 & 6.25 & 2.19
      & 31.61 & 4.00 & 16.67 & 3.95
      & 48.15 & 43.48
      & 1.22 \\

    VQA$^{\mathrm {T}}$
      & 5.26 & 1.67 & 38.42 & 20.73
      & 0.12 & 0.37 & 0.25 & 4.27
      & 68.22 & 65.69
      & 0.12 \\

    \cmidrule(lr){1-12}

    Public model
      & 53.36 & 7.90 & 42.13 & 42.87
      & 51.65 & 16.81 & 32.80 & 2.54
      & 70.69 & 69.26
      & 31.82 \\
    
    Model Soup (avg)
      & 82.90 & 45.79 & 57.18 & 64.89
      & 66.11 & 39.11 & 51.67 & 2.31
      & 64.68 & 59.28
      & 39.57 \\

    Model Soup (wtd)
      & 83.04 & 46.28 & 58.67 & 65.92
      & 67.26 & 39.95 & 52.34 & 2.29
      & 65.83 & 59.89
      & 39.76 \\

    BTM
      & 83.98 & 47.07 & 59.28 & 66.53
      & 67.54 & 40.07 & 52.56 & 2.27
      & 66.21 & 60.56
      & 40.76 \\

    BTX
      & 83.56 & 47.29 & 59.34 & 67.80
      & 67.82 & 40.16 & 52.67 & 2.26
      & 65.98 & 60.24
      & 40.53 \\

    FlexOlmo-LoRA
      & 53.37 & 7.89 & 42.13 & 42.87
      & 51.60 & 16.81 & 32.78 & 2.55
      & 70.74 & 69.41
      & 31.74 \\

    FlexOlmo
      & 85.18 & 49.85 & 61.69 & 70.03
      & 68.73 & 43.06 & 54.84 & 2.29
      & 67.67 & 62.77
      & 41.63 \\

    \rowcolor{yesgreen!8}
    \method-LoRA
      & 53.39 & 7.91 & 42.14 & 42.89
      & 51.67 & 16.86 & 32.84 & 2.55
      & \textbf{70.78} & \textbf{69.52}
      & 31.77 \\

    \rowcolor{yesgreen!8}
    \method
      & \textbf{87.93} & \textbf{50.39} & \textbf{61.89} & \textbf{71.31}
      & \textbf{70.95} & \textbf{44.29} & \textbf{56.53} & \textbf{2.13}
      & 69.32 & 66.34
      & \textbf{41.87} \\

    \bottomrule
  \end{tabular}%
  }
  \label{tab:phi_img}
\end{table*}
\begin{table*}[t]
  \centering
  \caption{\textbf{Detailed instruction-following benchmark results with Phi-2.7B.} 
We report fine-grained results on POPE, MME, SEED-Bench, LLaVA$^{\mathrm{W}}$, and MM-Vet. 
The Public model denotes the shared backbone before client-specific expert composition. 
MoE-LLaVA$^\dagger$ denotes the original results reported in~\cite{moellava}, while MoE-LLaVA denotes the offline MoE model trained with joint access to the same client datasets used in our distributed setting. }
  \renewcommand{\arraystretch}{1.1}
  \setlength{\tabcolsep}{3.6pt}
  \scriptsize
  \resizebox{\linewidth}{!}{%
  \begin{tabular}{l cccc>{\hspace{8pt}} cc>{\hspace{8pt}} ccc>{\hspace{8pt}} cccc>{\hspace{8pt}} c}
    \toprule
    \multirow{2}{*}{\textbf{Method}} 
    & \multicolumn{4}{c}{\textbf{POPE}}
      & \multicolumn{2}{c}{\textbf{MME}}
      & \multicolumn{3}{c}{\textbf{SEED-Bench}}
      & \multicolumn{4}{c}{\textbf{LLaVA$^{\mathrm{W}}$}}
      & \multicolumn{1}{c}{\textbf{MM-Vet}} \\
    \cmidrule(lr){2-5}
    \cmidrule(lr){6-7}
    \cmidrule(lr){8-10}
    \cmidrule(lr){11-14}
    \cmidrule(lr){15-15}
      & Rand. $\uparrow$
      & Pop. $\uparrow$
      & Adv. $\uparrow$
      & Avg. $\uparrow$
      & Perc. $\uparrow$
      & Cog. $\uparrow$
      & All. $\uparrow$
      & Img. $\uparrow$
      & Vid. $\uparrow$
      & All $\uparrow$
      & Comp. $\uparrow$
      & Conv. $\uparrow$
      & Det. $\uparrow$ 
      & Acc. $\uparrow$ \\
    \midrule

    \multicolumn{15}{l}{\textbf{\textit{Offline reference models (upper bound)}}} \\
    LLaVA-Phi
      & \ub{--} & \ub{--} & \ub{--} & \ub{85.00}
      & \ub{1335.10} & \ub{--}
      & \ub{--} & \ub{--} & \ub{--}
      & \ub{--} & \ub{--} & \ub{--} & \ub{--}
      & \ub{28.90} \\

    MoE-LLaVA$^\dagger$
      & \ub{--} & \ub{--} & \ub{--} & \ub{85.70}
      & \ub{1423.00} & \ub{--}
      & \ub{--} & \ub{--} & \ub{--}
      & \ub{94.10} & \ub{--} & \ub{--} & \ub{--}
      & \ub{34.30} \\

      MoE-LLaVA
      & \ub{87.39} & \ub{85.83} & \ub{83.73} & \ub{85.65}
      & \ub{1331.75} & \ub{274.64}
      & \ub{60.84} & \ub{66.40} & \ub{39.79}
      & \ub{76.20} & \ub{78.70} & \ub{74.40} & \ub{73.70}
      & \ub{20.20} \\

    \cmidrule(lr){1-15}

    \multicolumn{15}{l}{\textbf{\textit{Individual experts}}} \\
    COCO
      & 80.07 & 76.13 & 73.29 & 76.50
      & 1096.75 & 250.00
      & 56.01 & 61.24 & 36.19
      & 65.70 & 69.50 & 63.20 & 60.70
      & 24.40 \\

    GQA
      & 78.47 & 74.47 & 73.37 & 75.44
      & 1074.07 & 251.79
      & 29.61 & 27.46 & 37.74
      & 52.50 & 47.90 & 55.10 & 57.60
      & 16.70 \\

    VQA$^{\text{OCR}}$
      & 63.95 & 64.93 & 64.80 & 64.56
      & 994.06 & 262.86
      & 58.34 & 63.44 & 39.02
      & 61.30 & 58.80 & 67.50 & 57.80
      & 17.60 \\

    VG
      & 66.05 & 73.07 & 68.27 & 69.13
      & 598.64 & 236.07
      & 49.57 & 53.72 & 33.83
      & 49.70 & 48.50 & 63.40 & 32.90
      & 16.80 \\

    VQA$^{\text{T}}$
      & 51.75 & 50.13 & 50.13 & 50.67
      & 71.55 & 31.07
      & 58.37 & 63.38 & 39.39
      & 61.00 & 59.30 & 71.30 & 51.90
      & 23.00 \\

    \cmidrule(lr){1-15}

    Public model
      & 83.54 & 79.37 & 78.53 & 80.48
      & 862.53 & 281.07
      & 59.36 & 64.42 & 40.17
      & 73.30 & 77.90 & 72.90 & 65.30
      & 23.20 \\

    Model Soup (avg)
      & 76.26 & 73.36 & 72.89 & 74.17
      & 1054.37 & 242.38
      & 53.38 & 58.17 & 33.50
      & 42.40 & 43.70 & 45.20 & 44.90
      & 15.70 \\

    Model Soup (wtd)
      & 76.72 & 73.54 & 73.04 & 74.43
      & 1067.26 & 250.40
      & 54.05 & 59.92 & 34.78
      & 43.20 & 44.70 & 46.20 & 45.10
      & 14.40 \\

    BTM
      & 74.81 & 72.57 & 71.03 & 72.80
      & 1106.64 & 256.64
      & 52.82 & 60.16 & 35.81
      & 41.70 & 44.30 & 45.40 & 44.80
      & 16.30 \\

    BTX
      & 75.37 & 73.14 & 72.35 & 73.62
      & 1096.48 & 252.10
      & 51.43 & 58.36 & 33.58
      & 44.40 & 45.10 & 46.80 & 46.30
      & 17.00 \\

    FlexOlmo-LoRA
      & 83.54 & 79.43 & 78.63 & 80.53
      & 862.53 & 281.06
      & 59.39 & 64.44 & 40.24
      & 72.10 & 75.40 & 72.90 & 65.00
      & 22.30 \\

    FlexOlmo
      & 74.91 & 75.30 & 74.73 & 74.98
      & 1193.71 & 250.00
      & 58.79 & 63.80 & 39.79
      & 52.00 & 52.60 & 56.00 & 45.70
      & 18.30 \\

    \rowcolor{yesgreen!8}
    \method-LoRA
      & \textbf{83.95} & 79.82 & 78.76 & \textbf{80.84}
      & 862.60 & 281.07
      & 59.38 & 64.46 & 40.14
      & \textbf{75.30} & \textbf{77.20} & \textbf{78.60} & \textbf{66.20}
      & \textbf{23.80} \\

    \rowcolor{yesgreen!8}
    \method
      & 80.86 & \textbf{80.80} & \textbf{79.83} & 80.50
      & \textbf{1291.41} & \textbf{287.50}
      & \textbf{59.96} & \textbf{65.00} & \textbf{40.88}
      & 55.90 & 54.10 & 69.10 & 42.10
      & 19.80 \\

    \bottomrule
  \end{tabular}%
  }
  \label{tab:phi_bench}
\end{table*}

\paragraph{Object Hallucination Evaluation.} We evaluate object hallucination using POPE, which formulates hallucination detection as a polling-based object-existence question answering task. Tab.~\ref{tab:pope_all_backbones} reports results under the \textit{Random}, \textit{Popular}, and \textit{Adversarial} settings, together with F1-score and yes-answer ratio. Across the three backbones, \method generally improves hallucination robustness over the corresponding public model and distributed baselines. In particular, the full \method variant achieves strong accuracy and F1-score on Qwen-1.8B and StableLM-1.6B, showing that public-anchored routing helps preserve reliable object grounding while reusing domain-specific experts. However, \method-LoRA is also competitive, especially for Phi-2.7B, indicating that lightweight expert adaptation can still provide effective hallucination mitigation. We also observe that the yes-answer ratio becomes more balanced compared with the public model in several settings, suggesting that \method reduces the tendency to over-answer ``yes'' and produces responses more consistent with the visual evidence.

\paragraph{Detailed Results.}
Tabs.~\ref{tab:qwen_img} and~\ref{tab:qwen_bench} show the detailed Qwen-1.8B results. 
Across the fine-grained academic-task metrics, \method generally improves over the distributed baselines. 
On instruction-following benchmarks, \method-LoRA is particularly strong on LLaVA$^{\mathrm{W}}$ and MM-Vet, showing that the parameter-efficient variant can provide competitive expert composition with lower adaptation cost. 
The individual experts are often strong on specific domains but vary substantially across benchmarks, motivating routing-based composition rather than relying on a single client expert.

Tabs.~\ref{tab:phi_img} and~\ref{tab:phi_bench} show the same evaluation for Phi-2.7B. 
The trends are consistent with Qwen-1.8B: \method improves the strongest distributed performance on most academic-task metrics, while \method-LoRA remains competitive on instruction-following benchmarks. 
These results indicate that the gains are not tied to a single language backbone and that public-anchored calibration transfers across MLLM configurations.

\paragraph{StableLM-1.6B Results.} Tabs.~\ref{tab:stablelm_img} and~\ref{tab:stablelm_bench} report the complete StableLM-1.6B results. 
The results further support the main findings: \method improves over model-merging and FlexOlmo-style baselines on most academic-task metrics, while \method-LoRA provides strong instruction-following performance, especially on LLaVA$^{\mathrm{W}}$ and MM-Vet. 
This confirms that the proposed public-private calibration remains effective on a third MLLM backbone.

\begin{table*}[t]
  \centering
\caption{\textbf{Detailed academic-task benchmark results with StableLM-1.6B.} 
We report fine-grained results on VQA$^{\mathrm{v2}}$, GQA, SQA, and VQA$^{\mathrm{T}}$. 
The Public model denotes the shared backbone before client-specific expert composition. 
MoE-LLaVA$^\dagger$ denotes the original results reported in~\cite{moellava}, while MoE-LLaVA denotes the offline MoE model trained with joint access to the same client datasets used in our distributed setting. }
  \renewcommand{\arraystretch}{1.1}
  \setlength{\tabcolsep}{3.6pt}
  \scriptsize
  \resizebox{\linewidth}{!}{%
  \begin{tabular}{l cccc>{\hspace{8pt}} cccc>{\hspace{8pt}} cc>{\hspace{8pt}} c}
    \toprule
    \multirow{2}{*}{\textbf{Method}}
      & \multicolumn{4}{c}{\textbf{VQA$^{\mathrm{v2}}$}}
      & \multicolumn{4}{c}{\textbf{GQA}}
      & \multicolumn{2}{c}{\textbf{SQA}}
      & \multicolumn{1}{c}{\textbf{VQA$^{\mathrm{T}}$}} \\
    \cmidrule(lr){2-5}
    \cmidrule(lr){6-9}
    \cmidrule(lr){10-11}
    \cmidrule(lr){12-12}
      & Yes/No $\uparrow$
      & Num. $\uparrow$
      & Other $\uparrow$
      & Overall $\uparrow$
      & Binary $\uparrow$
      & Open $\uparrow$
      & Acc. $\uparrow$
      & Dist. $\downarrow$
      & Acc. $\uparrow$
      & IMG-Acc. $\uparrow$
      & Acc. $\uparrow$ \\
    \midrule

    \multicolumn{12}{l}{\textbf{\textit{Offline reference models (upper bound)}}} \\
    MoE-LLaVA$^\dagger$
      & \ub{--} & \ub{--} & \ub{--} & \ub{76.70}
      & \ub{--} & \ub{--} & \ub{60.30} & \ub{--}
      & \ub{--} & \ub{62.60}
      & \ub{50.10} \\

    MoE-LLaVA
      & \ub{89.37} & \ub{55.63} & \ub{67.28} & \ub{76.65}
      & \ub{72.30} & \ub{44.28} & \ub{57.90} & \ub{2.07}
      & \ub{58.23} & \ub{57.96}
      & \ub{59.15} \\

    \cmidrule(lr){1-12}
    \multicolumn{12}{l}{\textbf{\textit{Individual experts}}} \\
    COCO
      & 85.06 & 50.21 & 60.35 & 69.26
      & 62.01 & 36.25 & 48.08 & 2.26
      & 50.04 & 38.28
      & 44.35 \\

    GQA
      & 84.94 & 49.52 & 58.93 & 68.57
      & 66.38 & 41.39 & 52.28 & 2.21
      & 48.01 & 40.31
      & 37.84 \\

    VQA$^{\text{OCR}}$
      & 84.04 & 46.41 & 53.26 & 65.16
      & 55.38 & 25.63 & 39.28 & 2.25
      & 50.08 & 47.94
      & 39.35 \\

    VG
      & 60.17 & 16.60 & 43.19 & 47.17
      & 30.95 & 12.51 & 20.97 & 2.91
      & 41.22 & 37.83
      & 11.47 \\

    VQA$^{\mathrm{T}}$
      & 5.14 & 2.05 & 40.51 & 22.48
      & 0.33 & 0.60 & 0.48 & 4.24
      & 49.75 & 50.28
      & 2.54 \\

    \cmidrule(lr){1-12}

    Public model
      & 2.38 & 6.39 & 47.95 & 24.46
      & 12.02 & 20.69 & 16.71 & 2.40
      & 50.70 & 54.73
      & 27.24 \\
      
    Model Soup (avg)
      & 82.58 & 49.50 & 58.99 & 67.63
      & 64.93 & 38.78 & 50.79 & 2.28
      & 51.93 & 50.96
      & 42.81 \\

    Model Soup (wtd)
      & 83.45 & 50.02 & 59.61 & 68.34
      & 65.62 & 39.19 & 51.32 & 2.26
      & 52.48 & 51.50
      & 43.26 \\

    BTM
      & 84.32 & 50.57 & 60.23 & 69.05
      & 66.98 & 40.01 & 52.39 & 2.21
      & 53.14 & 52.03
      & 43.71 \\

    BTX
      & 85.19 & 51.07 & 60.86 & 69.76
      & 66.30 & 39.60 & 51.86 & 2.24
      & 53.57 & 52.48
      & 44.16 \\

    FlexOlmo-LoRA
      & 72.48 & 43.68 & 53.40 & 62.61
      & 57.38 & 39.58 & 47.92 & 1.96
      & 49.06 & 42.05
      & 37.42 \\

    FlexOlmo
      & 84.05 & 53.35 & 63.21 & 70.67
      & 62.45 & 42.01 & 51.39 & 2.13
      & 50.29 & 42.64
      & 44.50 \\

    \rowcolor{yesgreen!8}
    \method-LoRA
      & 76.21 & 46.82 & 56.27 & 64.93
      & 60.25 & 39.15 & 49.17 & \textbf{1.85}
      & 51.48 & 50.25
      & 39.40 \\

    \rowcolor{yesgreen!8}
    \method
      & \textbf{86.93} & \textbf{52.11} & \textbf{62.10} & \textbf{71.19}
      & \textbf{68.35} & \textbf{40.82} & \textbf{53.46} & 2.19
      & \textbf{54.66} & \textbf{53.64}
      & \textbf{45.06} \\

    \bottomrule
  \end{tabular}%
  }
  \label{tab:stablelm_img}
\end{table*}
\begin{table*}[!th]
  \centering
  \caption{\textbf{Detailed instruction-following benchmark results with StableLM-1.6B.} 
We report fine-grained results on POPE, MME, SEED-Bench, LLaVA$^{\mathrm{W}}$, and MM-Vet. 
The Public model denotes the shared backbone before client-specific expert composition. 
MoE-LLaVA$^\dagger$ denotes the original results reported in~\cite{moellava}, while MoE-LLaVA denotes the offline MoE model trained with joint access to the same client datasets used in our distributed setting. }
  \renewcommand{\arraystretch}{1.1}
  \setlength{\tabcolsep}{3.6pt}
  \scriptsize
  \resizebox{\linewidth}{!}{%
  \begin{tabular}{l cccc>{\hspace{8pt}} cc>{\hspace{8pt}} ccc>{\hspace{8pt}} cccc>{\hspace{8pt}} c}
    \toprule
    \multirow{2}{*}{\textbf{Method}} 
    & \multicolumn{4}{c}{\textbf{POPE}}
      & \multicolumn{2}{c}{\textbf{MME}}
      & \multicolumn{3}{c}{\textbf{SEED-Bench}}
      & \multicolumn{4}{c}{\textbf{LLaVA$^{\mathrm{W}}$}}
      & \multicolumn{1}{c}{\textbf{MM-Vet}} \\
    \cmidrule(lr){2-5}
    \cmidrule(lr){6-7}
    \cmidrule(lr){8-10}
    \cmidrule(lr){11-14}
    \cmidrule(lr){15-15}
      & Rand. $\uparrow$
      & Pop. $\uparrow$
      & Adv. $\uparrow$
      & Avg. $\uparrow$
      & Perc. $\uparrow$
      & Cog. $\uparrow$
      & All. $\uparrow$
      & Img. $\uparrow$
      & Vid. $\uparrow$
      & All $\uparrow$
      & Comp. $\uparrow$
      & Conv. $\uparrow$
      & Det. $\uparrow$ 
      & Acc. $\uparrow$ \\
    \midrule

    \multicolumn{15}{l}{\textbf{\textit{Offline reference models (upper bound)}}} \\
    MoE-LLaVA$^\dagger$
      & \ub{--} & \ub{--} & \ub{--} & \ub{85.70}
      & \ub{1318.20} & \ub{--}
      & \ub{--} & \ub{--} & \ub{--}
      & \ub{86.80} & \ub{--} & \ub{--} & \ub{--}
      & \ub{26.90} \\

    MoE-LLaVA
      & \ub{85.36} & \ub{85.61} & \ub{83.59} & \ub{84.85}
      & \ub{1304.56} & \ub{233.04}
      & \ub{55.42} & \ub{61.56} & \ub{36.91}
      & \ub{54.20} & \ub{53.70} & \ub{68.60} & \ub{53.50}
      & \ub{18.40} \\

    \cmidrule(lr){1-15}
    \multicolumn{15}{l}{\textbf{\textit{Individual experts}}} \\
    COCO
      & 83.74 & 83.56 & 78.28 & 81.86
      & 1230.27 & 209.48
      & 48.47 & 52.31 & 31.08
      & 69.50 & 71.00 & 73.00 & 60.30
      & 20.60 \\

    GQA
      & 83.82 & 84.01 & 79.32 & 82.38
      & 1243.68 & 211.70
      & 36.69 & 36.82 & 29.24
      & 53.00 & 50.30 & 66.40 & 39.80
      & 17.70 \\

    VQA$^{\text{OCR}}$
      & 74.36 & 74.17 & 72.90 & 73.81
      & 1013.82 & 201.32
      & 47.92 & 51.51 & 30.28
      & 61.10 & 61.80 & 57.70 & 61.10
      & 15.50 \\

    VG
      & 84.78 & 77.50 & 74.20 & 78.83
      & 649.23 & 125.71
      & 4.56 & 5.77 & 0.05
      & 51.70 & 51.60 & 60.30 & 41.40
      & 20.60 \\

    VQA$^{\text{T}}$
      & 55.70 & 53.47 & 52.57 & 53.91
      & 462.42 & 131.43
      & 46.80 & 50.31 & 33.48
      & 61.40 & 58.60 & 69.30 & 56.40
      & 21.10 \\

    \cmidrule(lr){1-15}

    Public model
      & 58.76 & 53.80 & 53.37 & 55.31
      & 518.52 & 243.57
      & 48.10 & 52.02 & 33.27
      & 68.00 & 71.30 & 71.30 & 57.10
      & 20.00 \\
    
    Model Soup (avg)
      & 79.96 & 79.29 & 77.41 & 78.89
      & 1126.25 & 205.08
      & 50.14 & 54.96 & 31.98
      & 48.80 & 44.70 & 59.50 & 43.00
      & 12.70 \\

    Model Soup (wtd)
      & 80.73 & 80.05 & 78.15 & 79.64
      & 1134.57 & 203.11
      & 50.63 & 55.49 & 32.29
      & 49.30 & 45.10 & 60.10 & 43.40
      & 12.90 \\

    BTM
      & 81.48 & 80.80 & 79.61 & 80.63
      & 1179.35 & 208.97
      & 51.11 & 56.02 & 32.60
      & 49.80 & 45.60 & 60.70 & 43.80
      & 13.00 \\

    BTX
      & 82.24 & 81.55 & 78.88 & 80.89
      & 1190.47 & 210.92
      & 51.60 & 56.55 & 32.91
      & 50.20 & 46.00 & 61.20 & 44.20
      & 13.10 \\

    FlexOlmo-LoRA
      & 75.62 & 77.45 & 76.04 & 76.37
      & 956.41 & 200.57
      & 51.49 & 55.29 & 31.48
      & 70.40 & 72.60 & 73.50 & 60.20
      & 21.70 \\

    FlexOlmo
      & 77.84 & 78.07 & 76.87 & 77.59
      & 1171.74 & 221.79
      & 52.18 & 57.01 & 33.91
      & 50.40 & 44.50 & 60.00 & 49.50
      & 17.70 \\

    \rowcolor{yesgreen!8}
    \method-LoRA
      & 84.06 & 83.92 & 81.45 & 83.14
      & 1023.57 & 203.42
      & 52.41 & 57.05 & 33.78
      & \textbf{73.60} & \textbf{74.00} & \textbf{75.10} & \textbf{63.70}
      & \textbf{24.50} \\

    \rowcolor{yesgreen!8}
    \method
      & \textbf{84.98} & \textbf{84.27} & \textbf{82.27} & \textbf{83.84}
      & \textbf{1274.28} & \textbf{216.43}
      & \textbf{53.94} & \textbf{59.10} & \textbf{34.39}
      & 52.80 & 48.30 & 64.20 & 46.50
      & 14.80 \\

    \bottomrule
  \end{tabular}%
  }
  \label{tab:stablelm_bench}
\end{table*}

\section{Number of LoRA experts in \method LoRA}
\label{sec:lora}

\begin{wrapfigure}[10]{r}{0.4\textwidth}
\vspace{-0.5cm}
  \centering
    \includegraphics[width=0.9\linewidth]{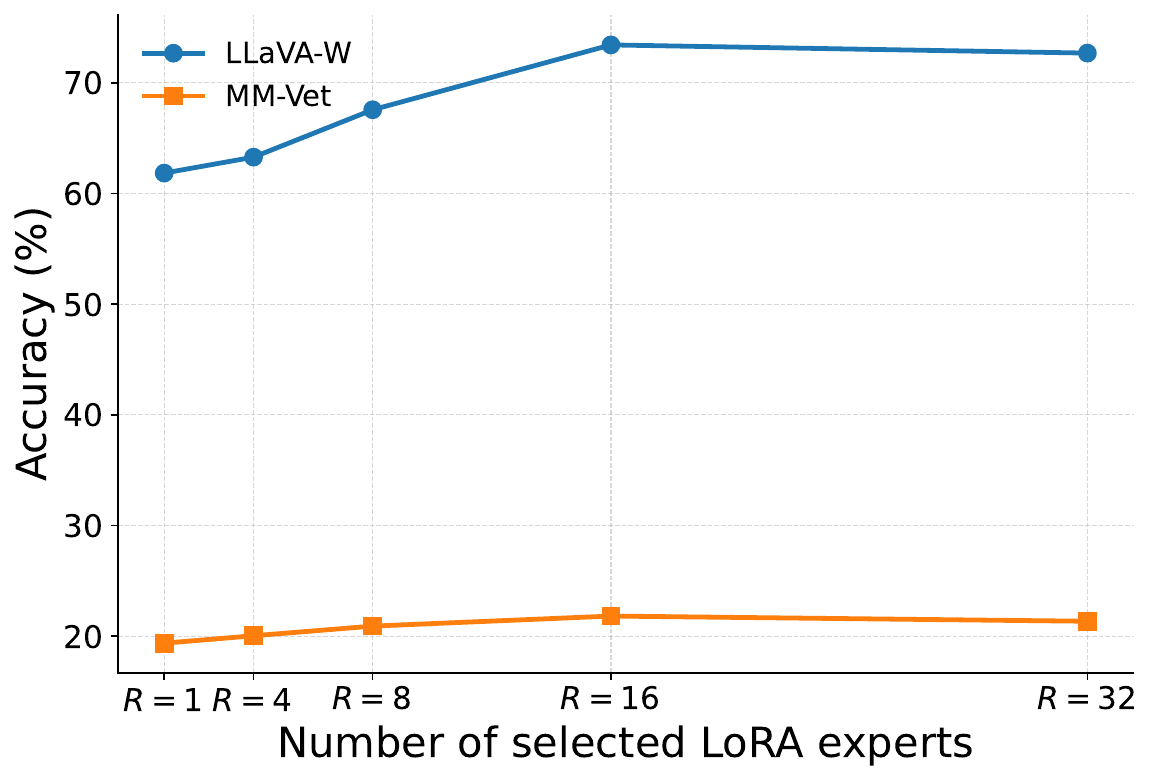}
    \vspace{-0.2cm}
    \caption{Number of LoRA experts.}
    \label{fig:lora_expert}
\end{wrapfigure}
We analyze the effect of using multiple LoRA experts per client in Fig.~\ref{fig:lora_expert}. Increasing the number of selected LoRA experts improves performance at first, showing that multiple lightweight experts provide complementary specialization paths and enable richer expert composition. The best performance is obtained with a moderate number of LoRA experts, while further increasing the number of experts slightly degrades performance. This suggests that too many selected LoRA experts may introduce redundant or noisy expert contributions, weakening sparse specialization. Therefore, we use $R=16$ as the default setting, which provides the best trade-off between adaptation capacity and routing stability.

\section{Routing Distributions}
\label{sec:routing_analysis}

We present the routing distributions of \method in Figs.~\ref{fig:routing_qwen}, \ref{fig:routing_phi}, and \ref{fig:routing_stablelm}, corresponding to Qwen-1.8B, Phi-2.7B, and StableLM-1.6B. 
For each backbone, we compute the top-$4$ routing mass on representative image-QA and instruction-following benchmarks, where routing mass is the aggregate normalized router weight assigned to each expert across tokens at each MoE layer. 
The plots show how the router distributes computation between the shared public expert and the client-specific private experts trained on the datasets described in Section~\ref{sec:dataset_details}. 

Across all three backbones, \method assigns non-trivial routing mass to the public expert while still activating private experts in a benchmark-dependent manner. 
This indicates that the router does not simply replace the public expert with domain experts; instead, private experts act as calibrated residual contributors around the shared public expert.  
In most of the datasets, the public route becomes more dominant in later layers, suggesting that later layers rely more on general visual-language and instruction-following knowledge. Conversely, middle layers often show stronger private-expert activity, reflecting the use of domain-specific visual grounding and reasoning features. 
The comparison among Figs.~\ref{fig:routing_qwen}, \ref{fig:routing_phi}, and \ref{fig:routing_stablelm} further shows that routing behavior is backbone-dependent. Nevertheless, the same qualitative trend holds: \method produces sparse but non-exclusive expert usage, preserves a stable public pathway, and composes multiple private experts without requiring explicit domain labels.

\begin{figure*}[t]
    \centering

    \begin{subfigure}[t]{0.48\textwidth}
        \centering
        \includegraphics[width=\linewidth]{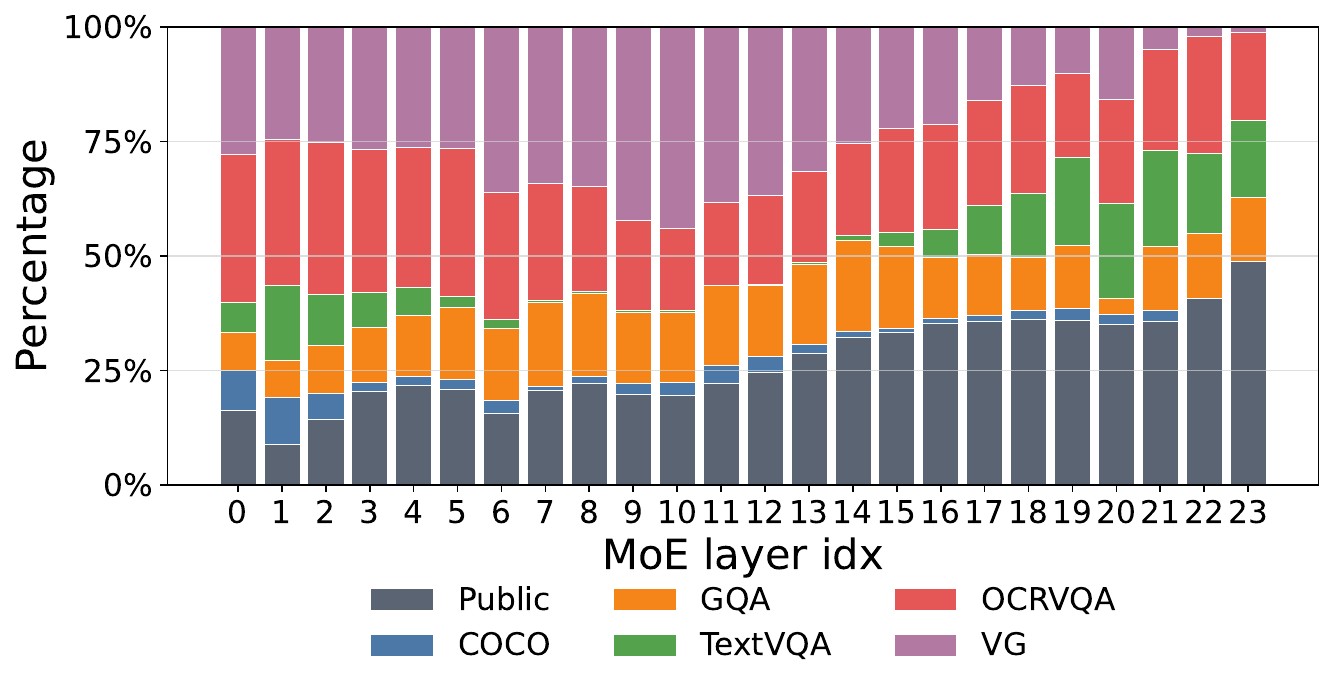}
        \caption{GQA}
        \label{fig:gqa_qwen_distmoe}
    \end{subfigure}
    \hfill
    \begin{subfigure}[t]{0.48\textwidth}
        \centering
        \includegraphics[width=\linewidth]{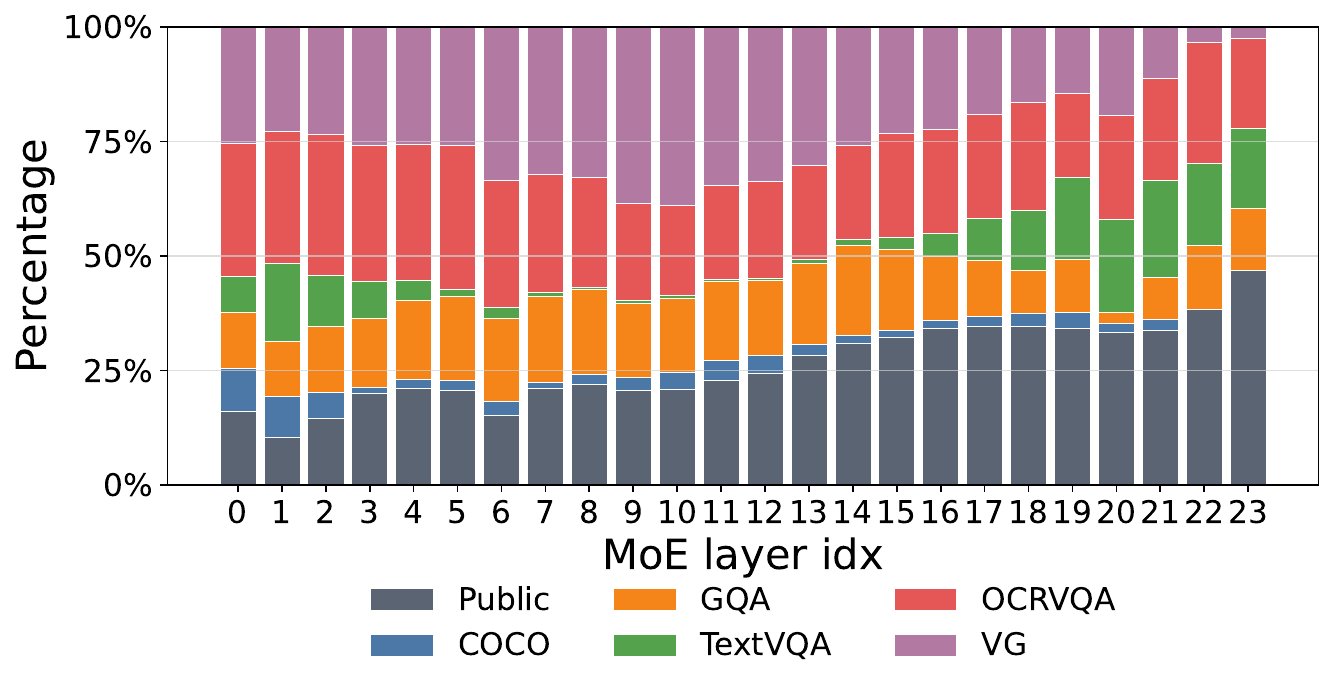}
        \caption{ScienceQA}
        \label{fig:sqa_qwen_distmoe}
    \end{subfigure}

    \vspace{0.5em}

    \begin{subfigure}[t]{0.48\textwidth}
        \centering
        \includegraphics[width=\linewidth]{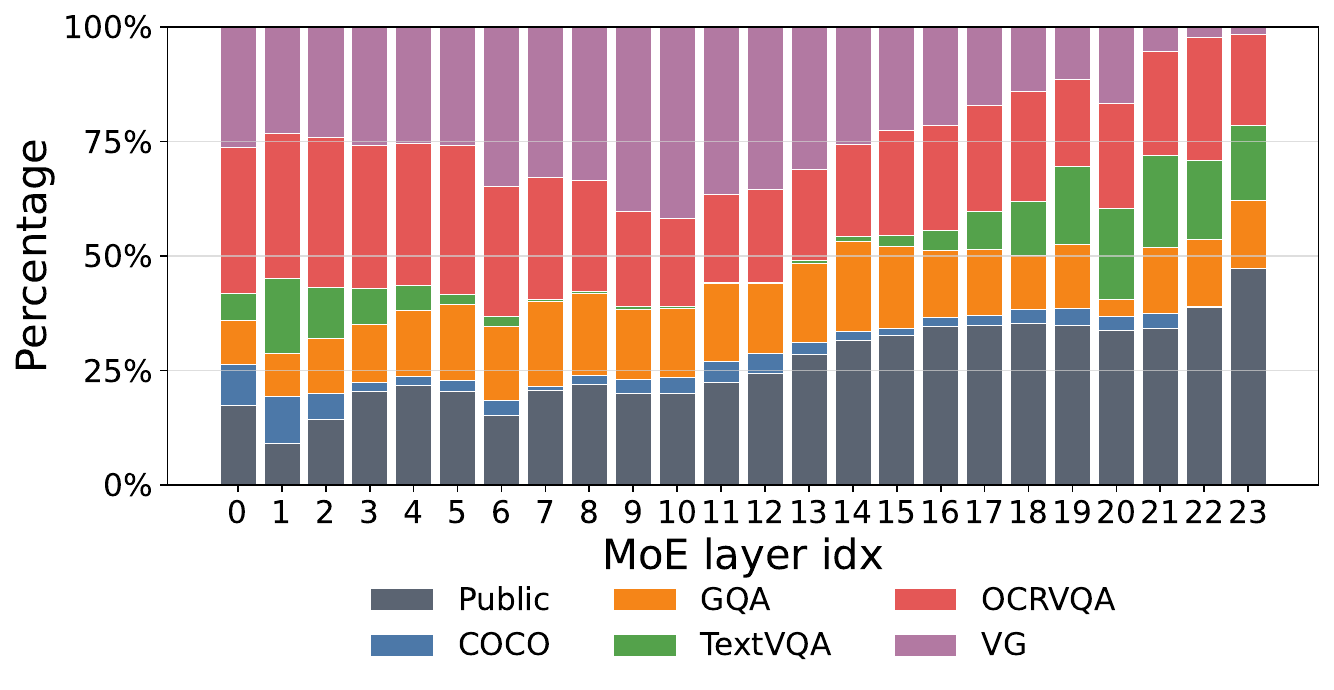}
        \caption{TextVQA}
        \label{fig:textvqa_qwen_distmoe}
    \end{subfigure}
    \hfill
    \begin{subfigure}[t]{0.48\textwidth}
        \centering
        \includegraphics[width=\linewidth]{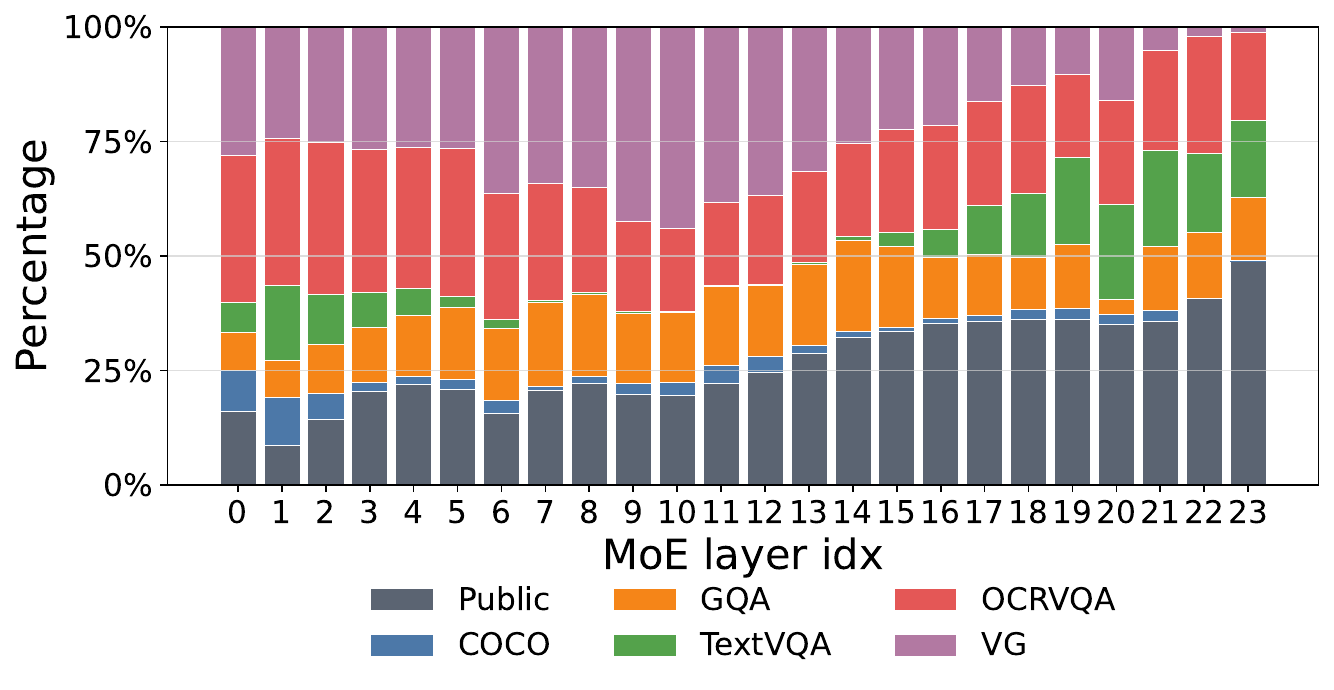}
        \caption{POPE}
        \label{fig:pope_qwen_distmoe}
    \end{subfigure}

    \vspace{0.5em}

    \begin{subfigure}[t]{0.48\textwidth}
        \centering
        \includegraphics[width=\linewidth]{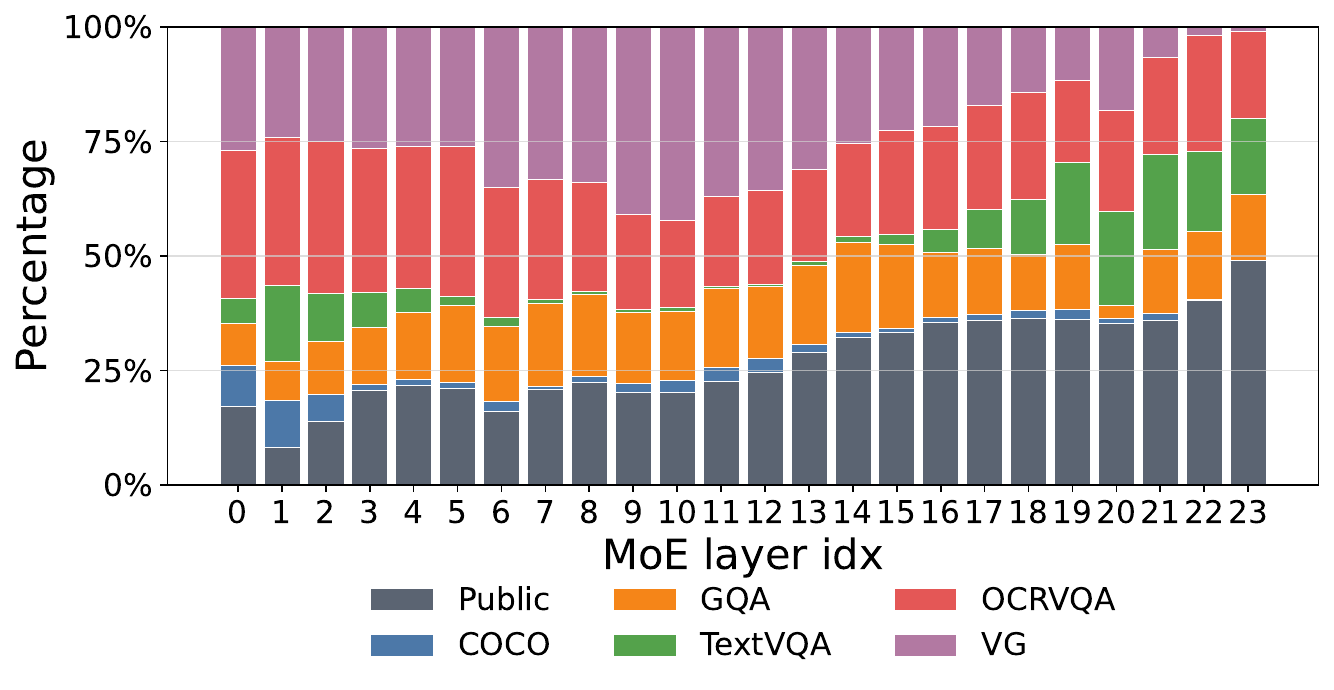}
        \caption{LLaVA-Bench}
        \label{fig:llavabench_qwen_distmoe}
    \end{subfigure}
    \hfill
    \begin{subfigure}[t]{0.48\textwidth}
        \centering
        \includegraphics[width=\linewidth]{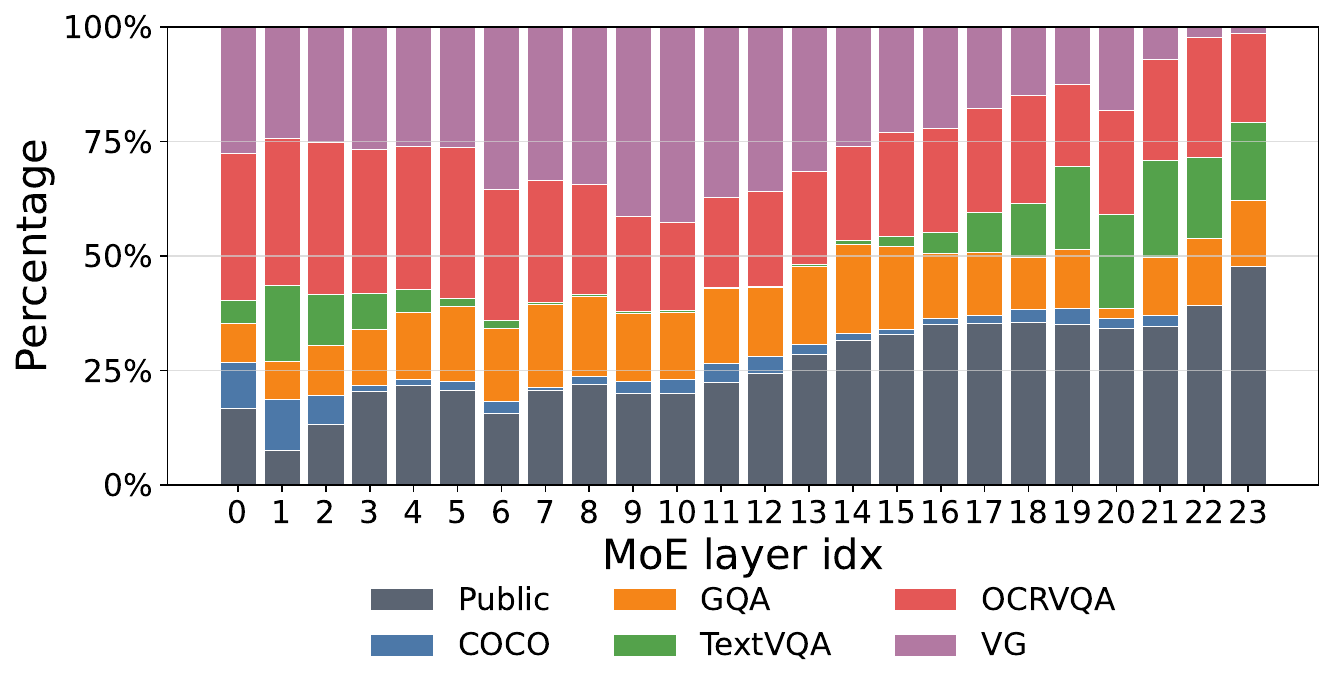}
        \caption{MM-Vet}
        \label{fig:mmvet_qwen_distmoe}
    \end{subfigure}

    \caption{Top-4 routing mass of \method with Qwen-1.8B.}
    \label{fig:routing_qwen}
\end{figure*}

\begin{figure*}[t]
    \centering

    \begin{subfigure}[t]{0.48\textwidth}
        \centering
        \includegraphics[width=\linewidth]{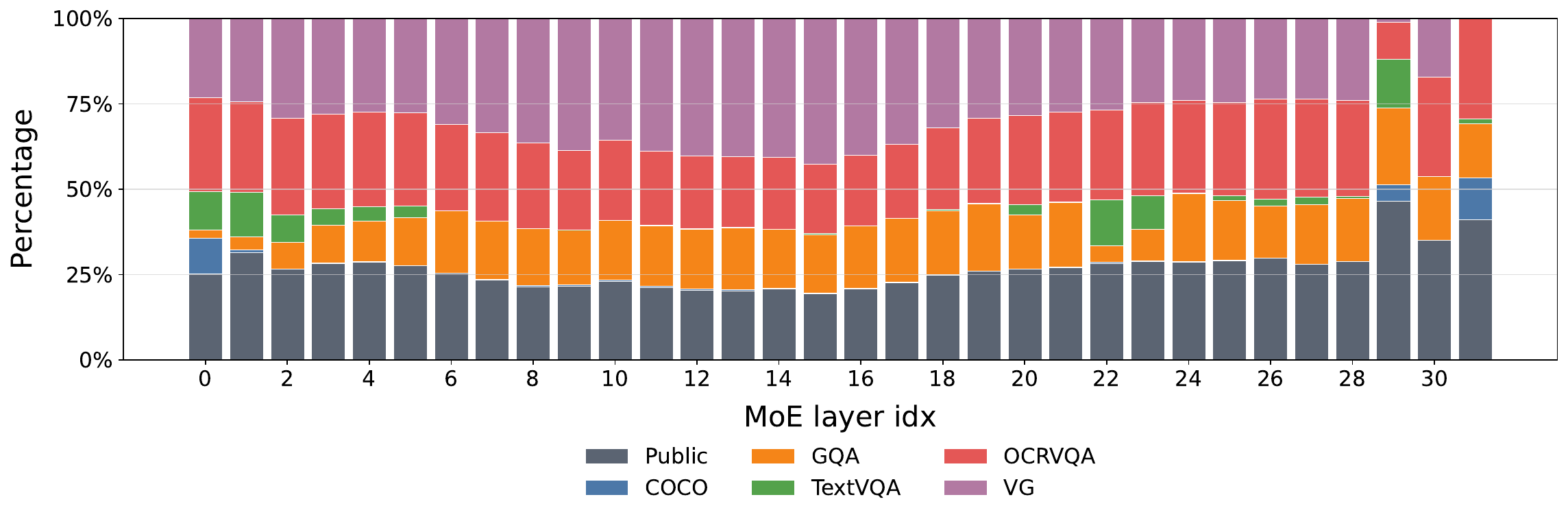}
        \caption{GQA}
        \label{fig:gqa_phi_distmoe}
    \end{subfigure}
    \hfill
    \begin{subfigure}[t]{0.48\textwidth}
        \centering
        \includegraphics[width=\linewidth]{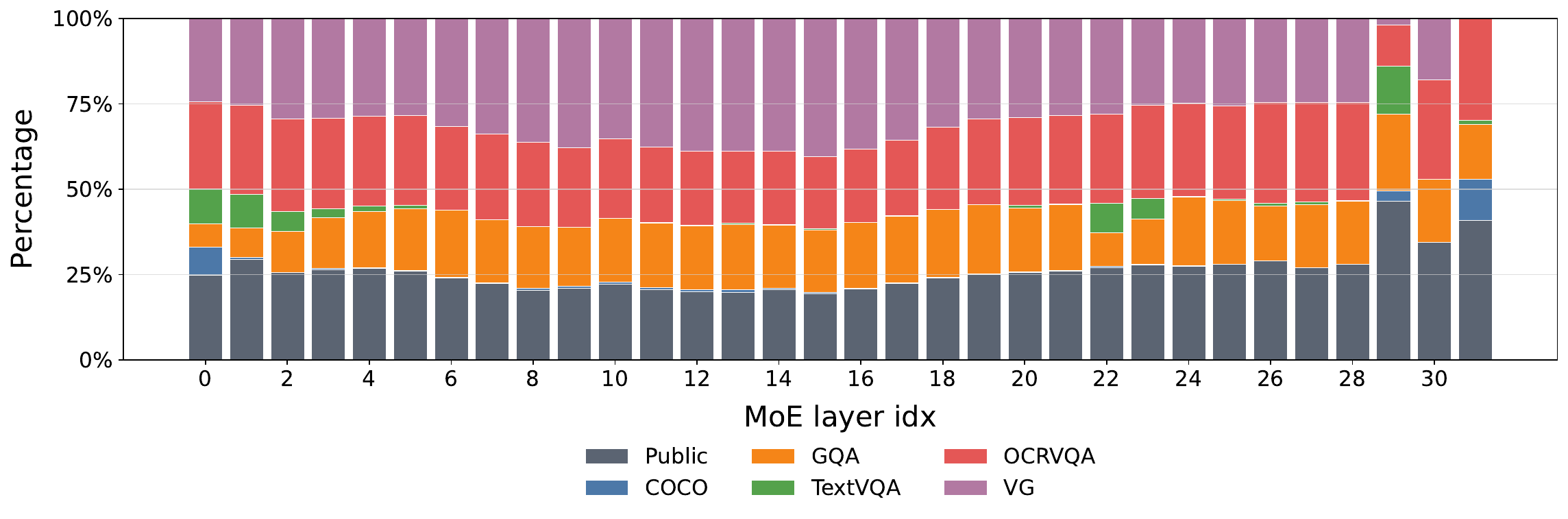}
        \caption{ScienceQA}
        \label{fig:sqa_phi_distmoe}
    \end{subfigure}

    \vspace{0.5em}

    \begin{subfigure}[t]{0.48\textwidth}
        \centering
        \includegraphics[width=\linewidth]{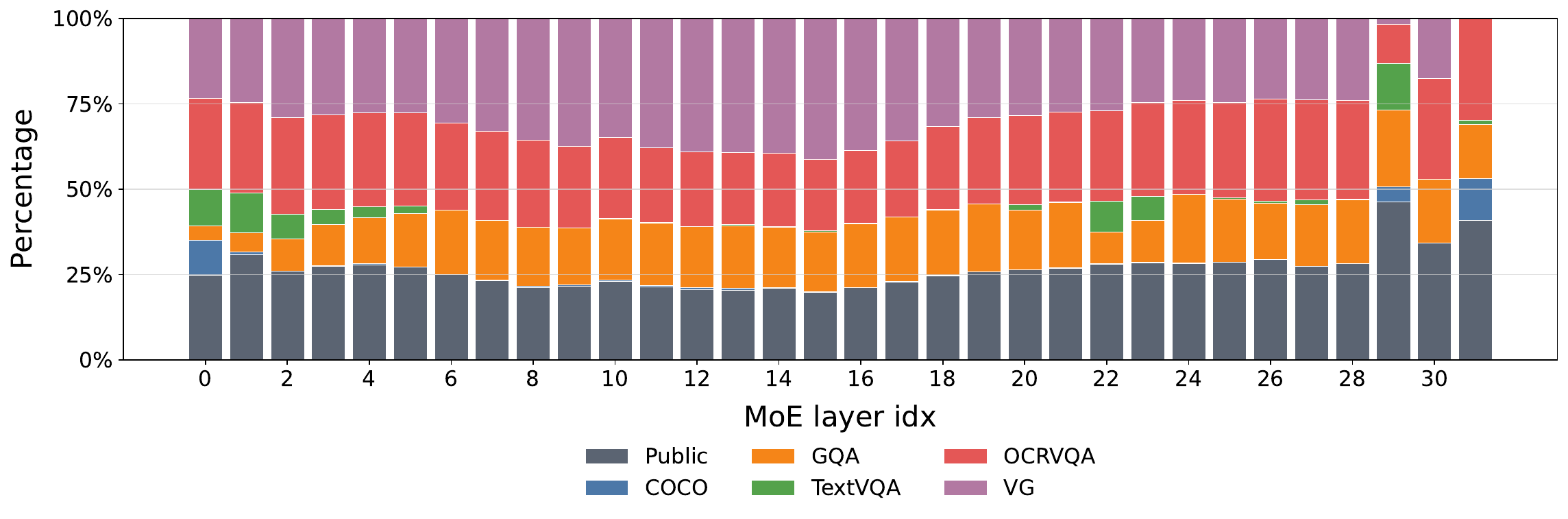}
        \caption{TextVQA}
        \label{fig:textvqa_phi_distmoe}
    \end{subfigure}
    \hfill
    \begin{subfigure}[t]{0.48\textwidth}
        \centering
        \includegraphics[width=\linewidth]{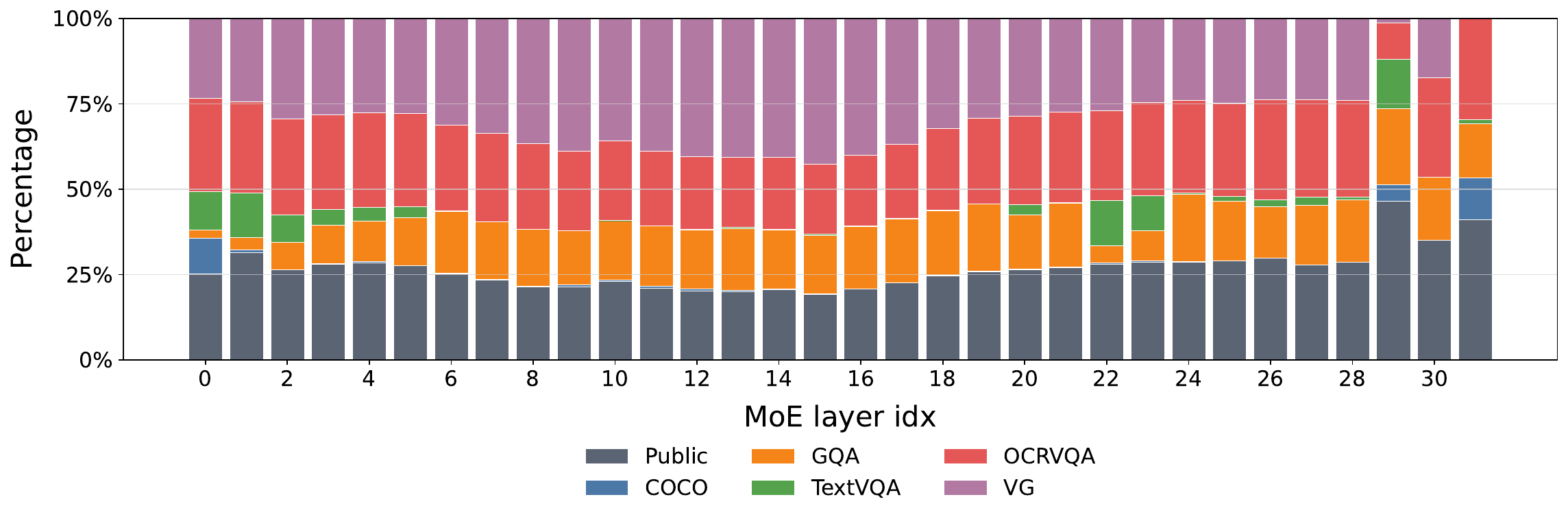}
        \caption{POPE}
        \label{fig:pope_phi_distmoe}
    \end{subfigure}

    \vspace{0.5em}

    \begin{subfigure}[t]{0.48\textwidth}
        \centering
        \includegraphics[width=\linewidth]{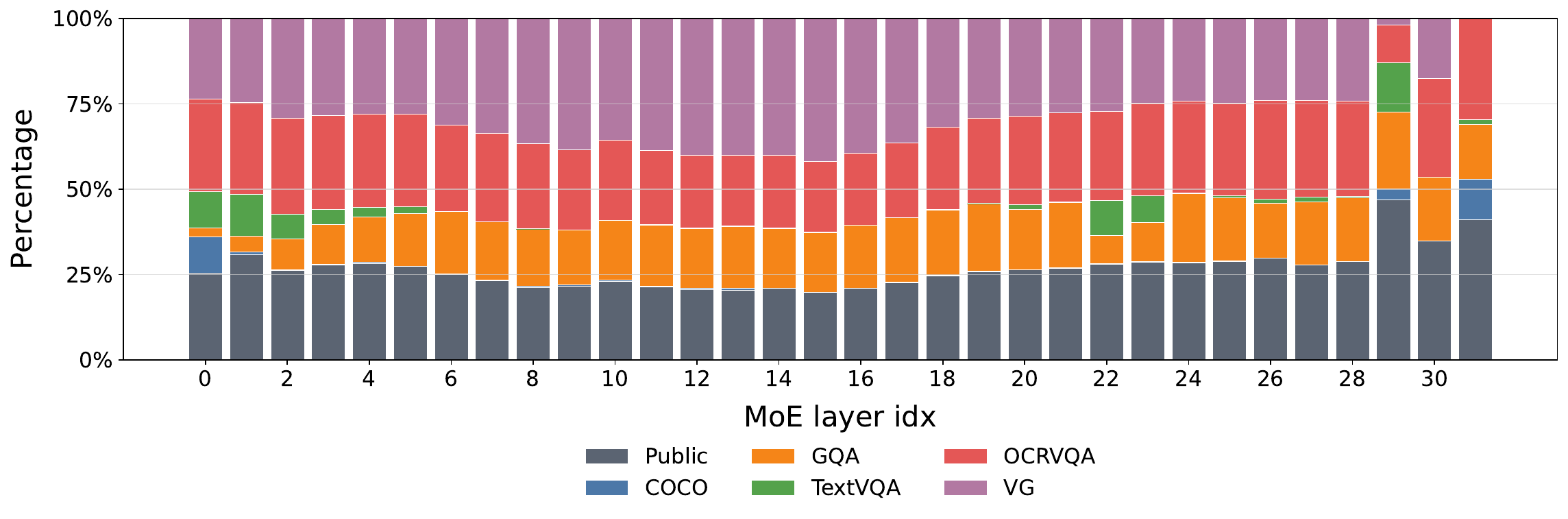}
        \caption{LLaVA-Bench}
        \label{fig:llavabench_phi_distmoe}
    \end{subfigure}
    \hfill
    \begin{subfigure}[t]{0.48\textwidth}
        \centering
        \includegraphics[width=\linewidth]{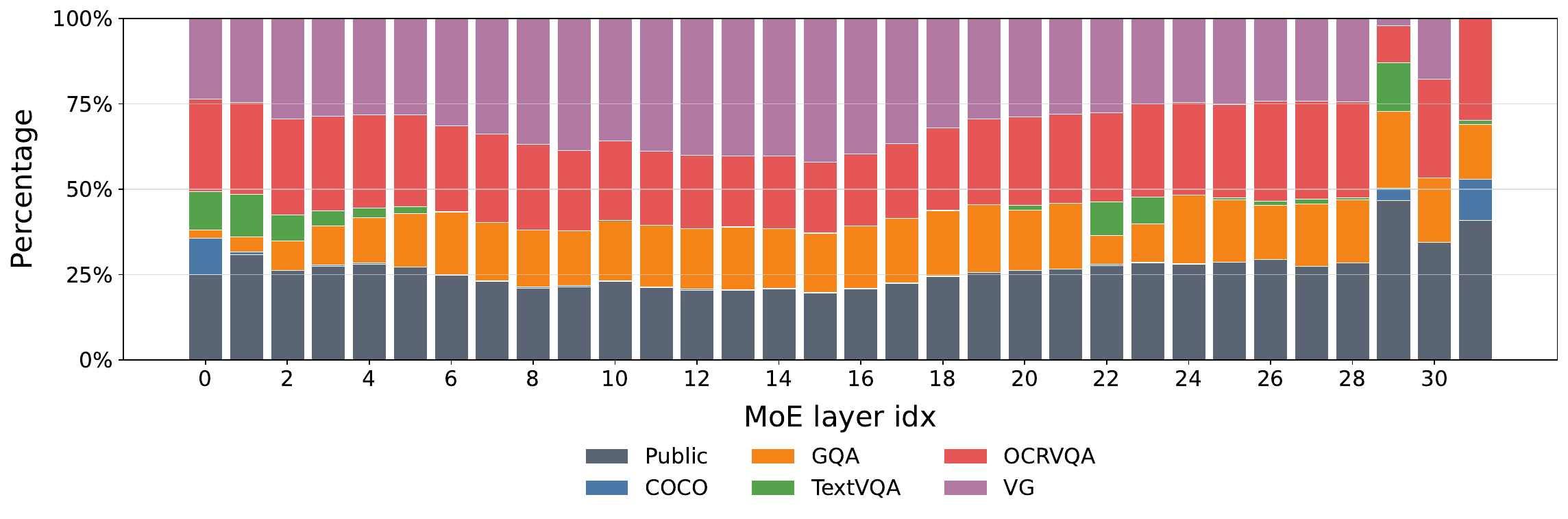}
        \caption{MM-Vet}
        \label{fig:mmvet_phi_distmoe}
    \end{subfigure}

    \caption{Top-4 routing mass of \method with Phi-2.7B.}
    \label{fig:routing_phi}
\end{figure*}

\begin{figure*}[t]
    \centering

    \begin{subfigure}[t]{0.48\textwidth}
        \centering
        \includegraphics[width=\linewidth]{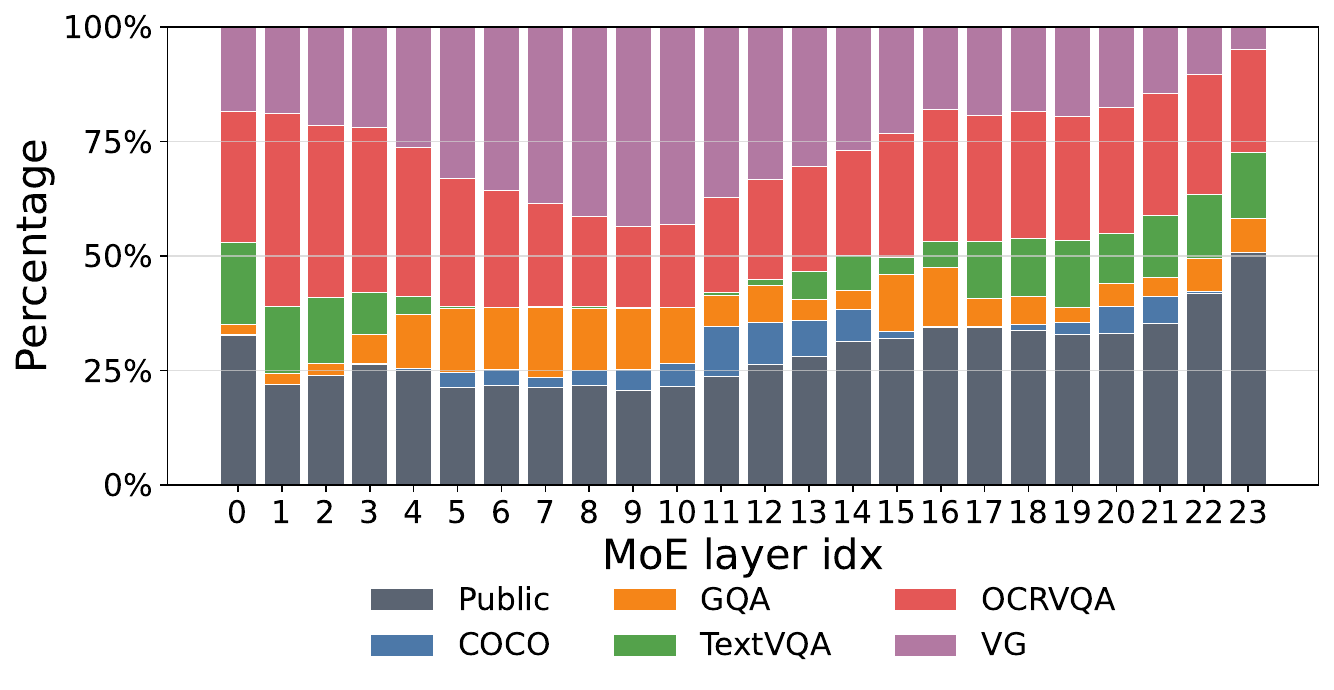}
        \caption{GQA}
        \label{fig:gqa_stablelm_distmoe}
    \end{subfigure}
    \hfill
    \begin{subfigure}[t]{0.48\textwidth}
        \centering
        \includegraphics[width=\linewidth]{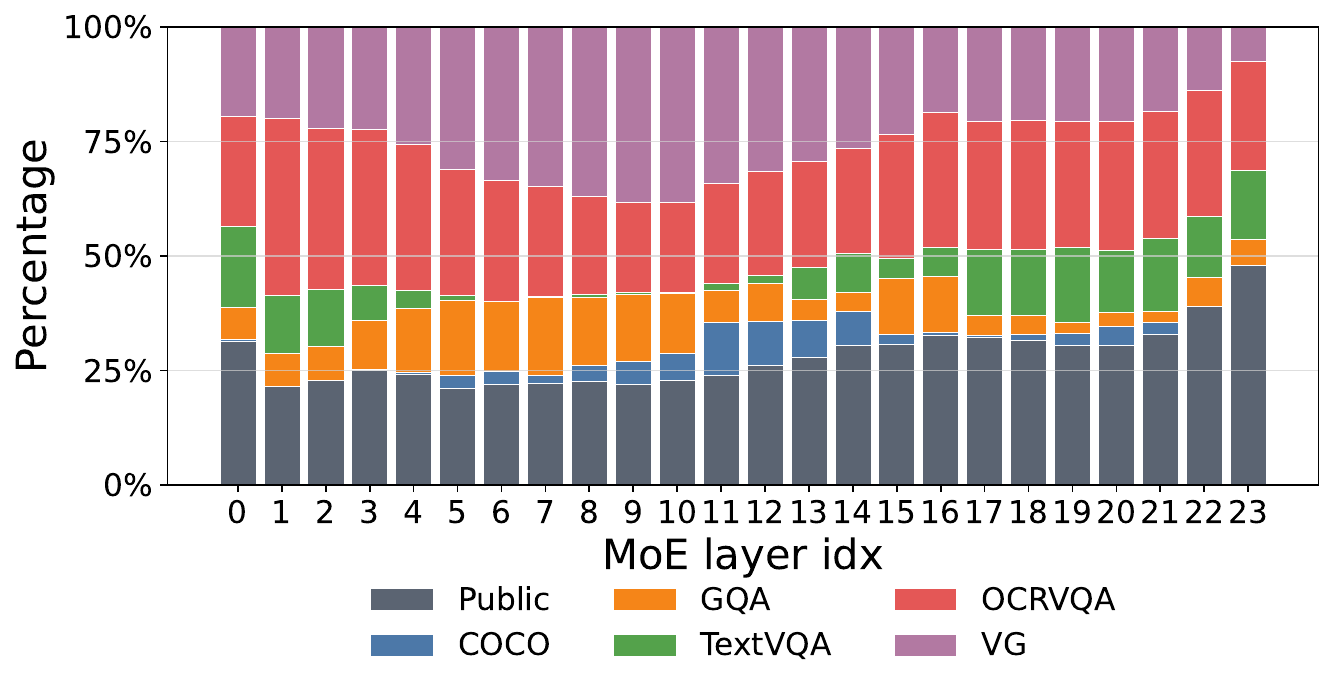}
        \caption{ScienceQA}
        \label{fig:sqa_stablelm_distmoe}
    \end{subfigure}

    \vspace{0.5em}

    \begin{subfigure}[t]{0.48\textwidth}
        \centering
        \includegraphics[width=\linewidth]{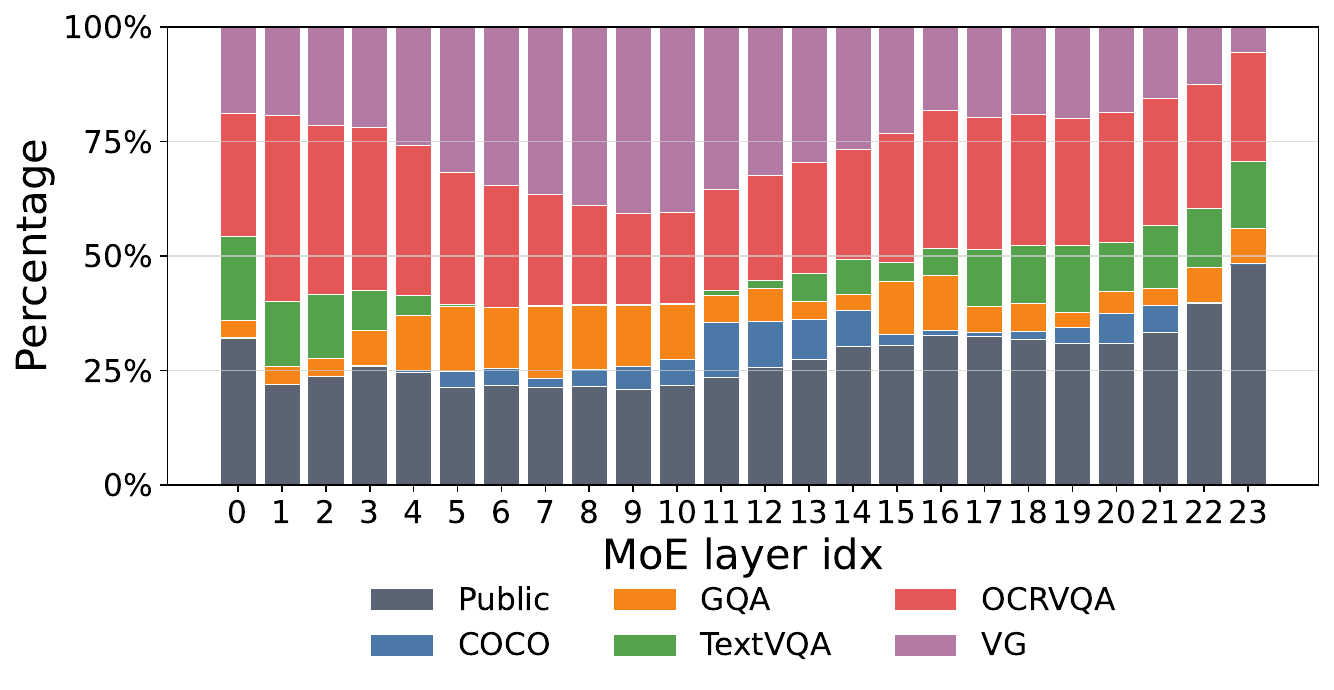}
        \caption{TextVQA}
        \label{fig:textvqa_stablelm_distmoe}
    \end{subfigure}
    \hfill
    \begin{subfigure}[t]{0.48\textwidth}
        \centering
        \includegraphics[width=\linewidth]{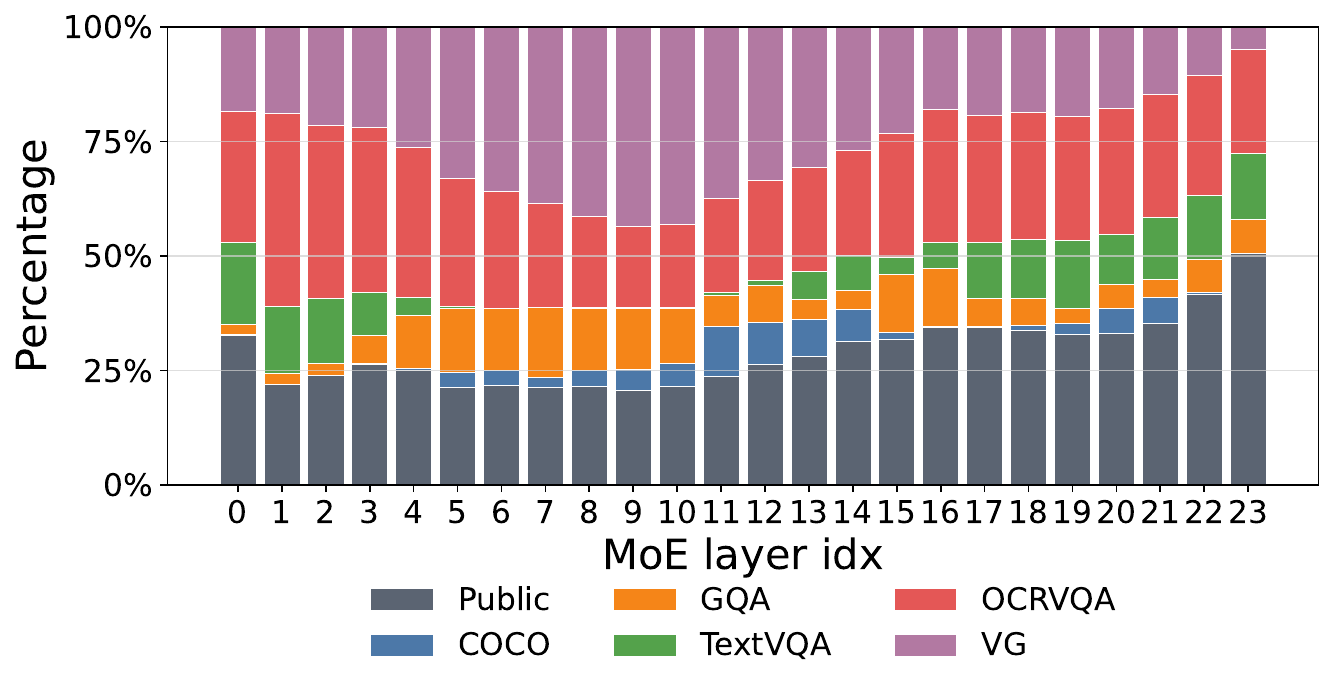}
        \caption{POPE}
        \label{fig:pope_stablelm_distmoe}
    \end{subfigure}

    \vspace{0.5em}

    \begin{subfigure}[t]{0.48\textwidth}
        \centering
        \includegraphics[width=\linewidth]{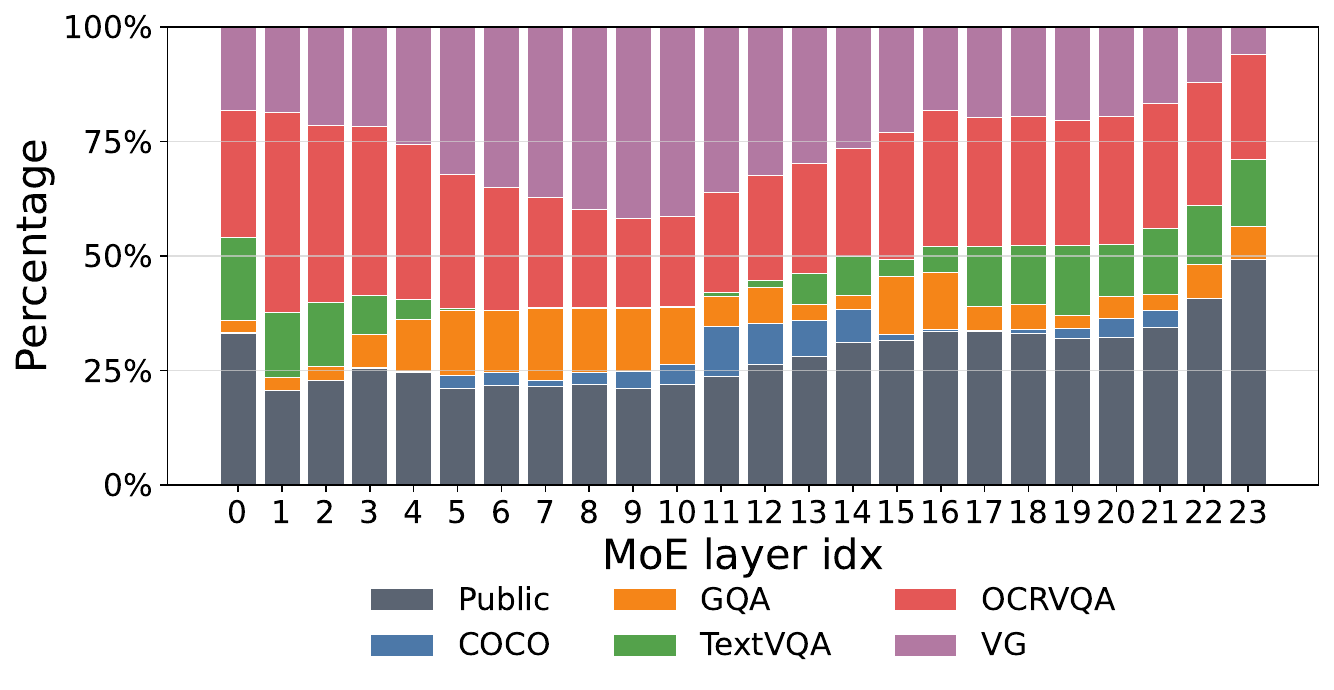}
        \caption{LLaVA-Bench}
        \label{fig:llavabench_stablelm_distmoe}
    \end{subfigure}
    \hfill
    \begin{subfigure}[t]{0.48\textwidth}
        \centering
        \includegraphics[width=\linewidth]{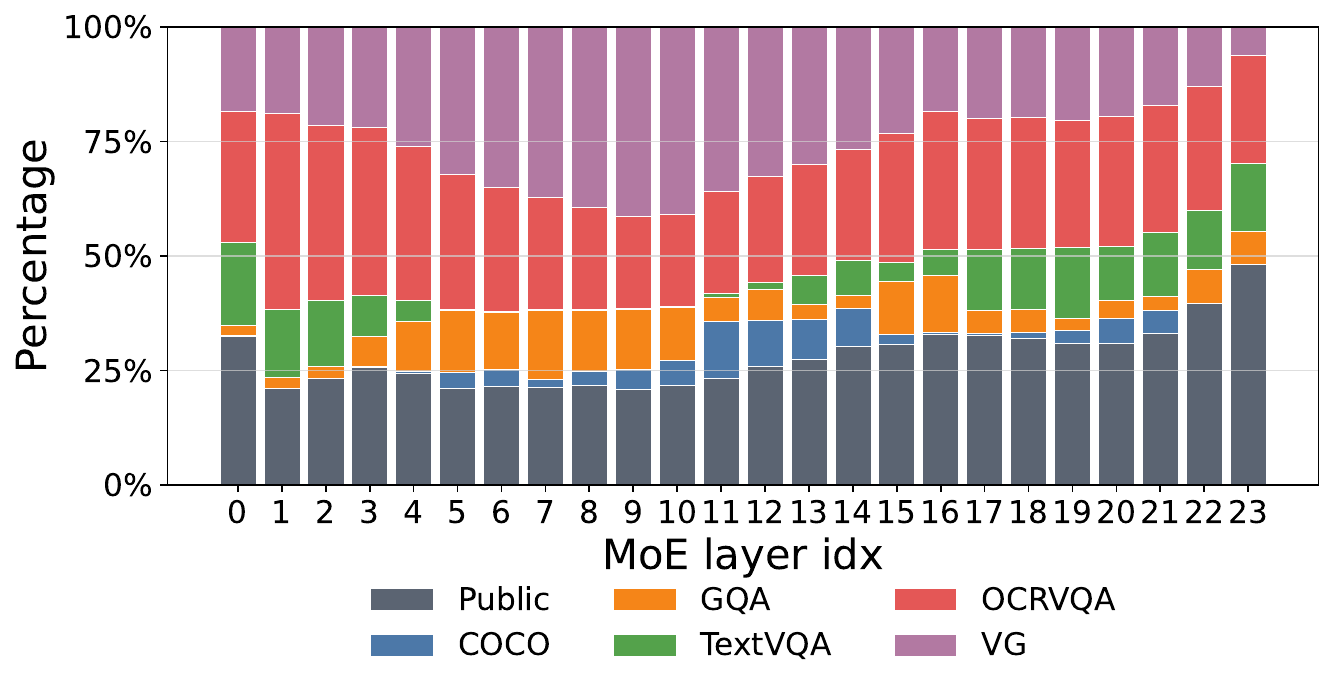}
        \caption{MM-Vet}
        \label{fig:mmvet_stablelm_distmoe}
    \end{subfigure}

    \caption{Top-4 routing mass of \method with StableLM-1.6B.}
    \label{fig:routing_stablelm}
\end{figure*}

\end{document}